\setlist[itemize,1]{leftmargin=*,itemsep=0pt,labelsep=3pt,topsep=2pt}
\setlist[enumerate,1]{leftmargin=*,itemsep=0pt,labelsep=3pt,topsep=2pt}
\newtheorem{definition}{Definition}
\newtheorem{theorem}{Theorem}
\newtheorem{lemma}{Lemma}
\newcommand{\kibitz}[2]{\ifnum\Comments=1{\color{#1}{#2}}\fi}
\newcommand{\ceil}[1]{\lceil #1 \rceil}
\renewcommand{\hat}{\widehat}
\newcommand{\E}{\mathbb{E}}
\renewcommand{\tilde}{\widetilde}
\newcommand{\set}[1]{\{#1\}}
\newcommand{\bbR}{\mathbb{R}}
\DeclareMathOperator*{\argmin}{arg\,min}
\DeclareMathOperator*{\argmax}{arg\,max}
\newcommand{\WW}{\mathcal{W}}
\newcommand{\abs}[1]{\left|#1\right|}
\newcommand{\Xc}{\mathcal{X}}
\newcommand{\Ac}{\mathcal{A}}
\newcommand{\Hc}{\mathcal{H}}
\newcommand{\Nc}{\mathcal{N}}
\newcommand{\HH}{\mathcal{H}}
\newcommand{\Kc}{\mathcal{K}}
\newcommand{\eps}{\varepsilon}
\newcommand{\AF}{\textrm{ApxFair}}
\newcommand{\norm}[1]{\lVert #1 \rVert}
\title{Ensuring Fairness Beyond the Training Data}
\author{Debmalya Mandal}
\author{Samuel Deng}
\author{Suman Jana}
\author{Jeannette M. Wing}
\author{Daniel Hsu}
\affil{Department of Computer Science and Data Science Institute, \\ Columbia University}
\begin{document}
\maketitle
{\def\thefootnote{}
\footnotetext{E-mail:
\texttt{dm3557@columbia.edu},
\texttt{sd3013@columbia.edu},
\texttt{suman@cs.columbia.edu},
\texttt{wing@columbia.edu},
\texttt{djhsu@cs.columbia.edu}}}

\begin{abstract}
We initiate the study of fair classifiers that are robust to perturbations in the training distribution. Despite recent progress, the literature on fairness has largely ignored the design of fair and robust classifiers. In this work, we develop classifiers that are fair not only with respect to the training distribution, but also for a class of distributions that are weighted perturbations of the training samples. We formulate a min-max objective function whose goal is to minimize a distributionally robust training loss, and at the same time, find a classifier that is fair with respect to a class of distributions. We first reduce this problem to finding a fair classifier that is robust with respect to the class of distributions. Based on online learning algorithm, we develop an iterative algorithm that provably converges to such a fair and robust solution. 
Experiments on standard machine learning fairness datasets suggest that, compared to the state-of-the-art fair classifiers, our classifier retains fairness guarantees and test accuracy for a large class of perturbations on the test set. Furthermore, our experiments show that there is an inherent trade-off between fairness robustness and accuracy of such classifiers.  
\end{abstract}

\section{Introduction}
Machine learning (ML) systems are often used for high-stakes decision-making, including bail decision and credit approval. Often these applications use  algorithms trained on past biased data, and such bias is reflected in the eventual decisions made by the algorithms. For example, \citet{BCZS+16} show that popular word embeddings implicitly encode societal biases, such as gender norms. Similarly, \citet{BG18} find that several facial recognition softwares perform better on lighter-skinned subjects  than on darker-skinned subjects. To mitigate such biases, there have been several approaches in the ML fairness community to design fair classifiers \cite{ZWSP+13,HPS16,ABDL+18}. 

However, the literature has largely ignored the robustness of such fair classifiers. The ``fairness'' of such classifiers are often evaluated on the sampled datasets, and are often unreliable because of various reasons including biased samples, missing and/or noisy attributes. Moreover, compared to the traditional machine learning setting, these problems are more prevalent in the fairness domain, as the data itself is biased to begin with. As an example, we consider how the optimized pre-processing algorithm \cite{CWVN+17} performs on ProPublica's COMPAS dataset~\citep{COMPAS} in terms of \emph{demographic parity} (DP), which measures the difference in accuracy between the protected groups. Figure~\ref{fig:compas-dp-dif} shows two situations -- (1) unweighted training distribution (in blue), and (2) weighted training distributions (in red). The optimized pre-processing algorithm~\cite{CWVN+17} yields a classifier that is almost fair on the unweighted training set ($\textrm{DP} \leq 0.02$). However, it has DP of at least $0.2$ on the weighted set, despite the fact that the marginal distributions of the features look almost the same for the two scenarios. This example motivates us to design a fair classifier that is robust to such perturbations. We also show how to construct such weighted examples using a few linear programs.

\noindent \textbf{Contributions}: In this work, we initiate the study of fair classifiers that are robust to perturbations in the training distribution. The set of perturbed distributions are given by any arbitrary weighted combinations of the training dataset, say $\WW$. Our main contributions are the following:
\begin{itemize}
    \item We develop classifiers that are fair not only with respect to the training distribution, but also for the class of distributions characterized by $\WW$. We formulate a min-max objective whose goal is to minimize a distributionally robust training loss, and simultaneously, find a classifier that is fair with respect to the entire class.
    \item We first reduce this problem to finding a fair classifier that is robust with respect to the class of distributions. Based on online learning algorithm, we develop an iterative algorithm that provably converges to such a fair and robust solution. 
    \item Experiments on standard machine learning fairness datasets suggest that, compared to the state-of-the-art fair classifiers, our classifier retains fairness guarantees and test accuracy for a large class of perturbations on the test set. Furthermore, our experiments show that there is an inherent trade-off between fairness robustness and accuracy of such classifiers. 
\end{itemize} 

\begin{figure}[!t]
\centering
\includegraphics[scale=0.4]{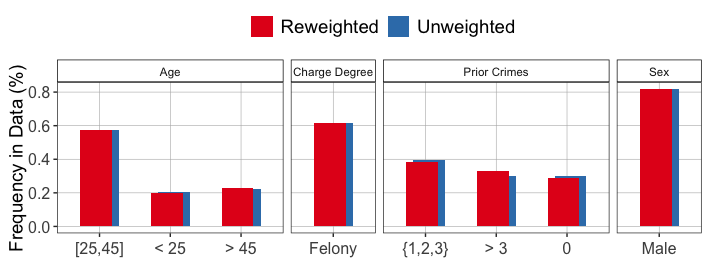}
\caption{\textit{Unweighted vs Reweighted COMPAS dataset.}  The marginals of the two distributions are almost the same, but standard fair classifiers show demographic parity of at least $0.2$ on the reweighted dataset.\label{fig:compas-dp-dif}}
   \vspace{-0.5cm}
\end{figure}

\noindent \textbf{Related Work}: 
Numerous proposals have been laid out to capture bias and discrimination in settings where decisions are delegated to algorithms. Such formalization of fairness can be \emph{statistical} \cite{FFMS+15,KC12,KAS11,HPS16,PRWK+17}, \emph{individual} \cite{DHPR+12,SKR19}, \emph{causal} \cite{KLRS17,KCPH+17,ZB18}, and even \emph{procedural} \cite{GZGW16}. We restrict attention to statistical fairness, which fix a small number of groups in the population and then compare some statistic (e.g., accuracy, false positive rate) across these groups. We mainly consider the notion of \emph{demographic parity}~\cite{FFMS+15,KC12,KAS11} and \emph{equalized odds}~\cite{HPS16} in this paper, but our method of designing robust and fair classifiers can be adapted to any type of statistical fairness.

On the other hand, there are three main approaches for designing a fair classifier. The \emph{pre-processing} approach tries to transform training data and leverage standard classifiers~\cite{FFMS+15,CWVN+17,KC12,ZWSP+13}. The \emph{in-processing} approach, on the other hand, directly modifies the learning algorithm to meet the fairness criteria~\cite{ABDL+18,KNRZ17,FKL16,ZVRG17}. The \emph{post-processing} approach, however, modifies the decisions of a classifier~\cite{HPS16,PRWK+17} to make it fair. Ours is an in-processing approach and mostly related to \cite{ABDL+18,KNRZ17,AIK18}. \citet{ABDL+18} and \citet{AIK18} show how binary classification problem with group fairness constraints can be reduced to a sequence of cost-sensitive classification problems. \citet{KNRZ17} follow a similar approach, but instead consider a combinatorial class of subgroup fairness constraints. Recently, \cite{BDHH+18} integrated and implemented a range of such fair classifiers in a GitHub project, which we leverage in our work.

In terms of technique, our paper falls in the category of \emph{distributionally robust optimization} (DRO), where the goal is to minimize the worst-case training loss for any distribution that is close to the training distribution by some metric. Various types of metrics have been considered including bounded $f$-divergence \cite{ND16,BDDM+13}, Wasserstein distance \cite{GK16,AEK15}, etc. To the best of our knowledge, prior literature has largely ignored enforcing constraints such as fairness in a distributionally robust sense. Further afield, our work has similarity with recent work in fairness testing inspired by the literature on program verification \cite{ADDA17,GBM17,TAGH+17}. These papers attempt to automatically discover discrimination in decision-making programs, whereas we develop tools based on linear program to discover distributions that expose potential unfairness.

\section{Problem and Definitions}

We will write $((x,a),y)$ to denote a training instance where $a \in \Ac$ denotes the protected attributes, $x \in \Xc$ denotes all the remaining attributes, and $y \in \set{0,1}$ denotes the outcome label. For a hypothesis $h$, $h(x,a) \in \set{0,1}$ denotes the outcome predicted by it, on an input $(x,a)$. We assume that the set of hypothesis is given by a class $\Hc$. 
Given a loss function $\ell : \set{0,1} \times \set{0,1} \rightarrow \bbR$, the goal of a standard fair classifier is to find a hypothesis $h^* \in \Hc$ that minimizes the training loss $\sum_{i=1}^n \ell(h(x_i,a_i),y_i)$ and is also fair according to some notion of fairness.

We aim to design classifiers that are fair with respect to a class of distributions that are weighted perturbations of the training distribution.
Let $\WW = \set{w \in \bbR_+^n : \sum_i w_i = 1}$ be the set of all possible weights. For a hypothesis $h$ and weight $w$, we define the \emph{weighted empirical risk}, $\ell(h,w) = \sum_{i=1}^n w_i \ell(h(x_i,a_i), y_i)$. We will write  $\delta_F^w(h)$ to define the ``unfairness gap'' with respect to the weighted empirical distribution defined by the weight $w$ and fairness constraint $F$ (e.g., demographic parity (DP) or equalized odds (EO)). For example, $\delta^w_{DP}(h)$ is defined as
\begin{equation}\label{eq:delta-w-dp}
\delta^w_{DP}(h) = \max_{a,a'\in \Ac} \abs{ \frac{\sum_{i: a_i = a} w_i h(x_i,a)}{\sum_{i: a_i = a} w_i} -  \frac{\sum_{i: a_i = a'} w_i h(x_i,a')}{{\sum_{i: a_i = a'} w_i}} }.
\end{equation}
Therefore, $\delta^w_{DP}(h)$ measures the maximum weighted difference in \emph{acceptance rates} between the two groups with respect to the distribution that assigns weight $w$ to the training examples.
On the other hand, $\delta^w_{EO}(h) = 1/2( \delta^w_{EO}(h|0) + \delta^2_{EO}(h|1) )$\footnote{We consider the average of false positive rate and true positive rate for simplicity, and our method can also handle more general definitions of EO. \cite{PRWK+17}}, where $\delta^w_{EO}(h|y)$ is defined as
$$\delta^w_{EO}(h|y) = \max_{a,a' \in \Ac} \abs{ \frac{\sum_{i: a_i = a,y_i = y} w_i h(x_i,a)}{\sum_{i: a_i = a,y_i = y} w_i} -  \frac{\sum_{i: a_i = a',y_i=y} w_i h(x_i,a')}{{\sum_{i: a_i = a',y_i=y} w_i}} }.$$
Therefore, $\delta^w_{EO}(h|0)$ (resp., $\delta^w_{EO}(h|1)$) measures the weighted difference in false (resp., true) positive rates between the two groups with respect to the weight $w$.
We will develop our theory using DP as an example of a notion of fairness, but our experimental results will concern both DP and EO.

We are now ready to formally define our main objective. 
For a class of hypothesis $\HH$, let $\HH_{\WW} = \set{h \in \HH: \delta^w_F(h) \le \epsilon\ \forall w \in \WW}$ be the set of hypothesis that are $\epsilon$-fair with respect to all the weights in the set $\WW$.
Our goal is to solve the following min-max problem:
\begin{equation}\label{eq:det-wt-classification}
\min_{h \in {\HH}_{\WW} } \max_{w \in \WW} \ell(h,w) 
\end{equation}
Therefore, we aim to minimize a robust loss with respect to a class of distributions indexed by $\WW$. Additionally, we also aim to find a classifier that is fair with respect to such perturbations. 

We allow our algorithm to output a randomized classifier, i.e., a distribution over the hypothesis $\HH$. This is necessary if the space $\HH$ is non-convex or if the fairness constraints are such that the set of feasible hypothesis $\HH_{\WW}$ is non-convex. For a randomized classifier $\mu$, its weighted empirical risk is $\ell(\mu,w) = \sum_{h} \mu(h) \ell(h,w)$, and its expected unfairness gap is $\delta^w_F(\mu) = \sum_h \mu(h) \delta^w_F(h)$.

\section{Design}
Our algorithm follows a top-down fashion. First we design a meta algorithm that reduces the min-max problem of Equation~\eqref{eq:det-wt-classification} to a loss minimization problem with respect to a sequence of weight vectors. Then we show how we can design a fair classifier that performs well with respect a fixed weight vector $w \in \WW$ in terms of accuracy, but is fair with respect to the entire set of weights $\WW$.

\subsection{Meta Algorithm}
\begin{algorithm}[!ht]
\DontPrintSemicolon
\KwInput{Training Set: $\{x_i,a_i,y_i\}_{i=1}^n$, set of weights: $\WW$, hypothesis class $\HH$, parameters $T$ and $\eta$.}

Set $\eta = \sqrt{2/T_m}$ and $w_0(i) = 1/n$ for all $i \in [n]$\\
$h_0 = \AF(w_0)$ \tcc{Approximate solution of $\argmin_{h \in \HH_{\WW}} \sum_{i=1}^n \ell(h(x_i,a_i), y_i)$.}
\For{each $t \in [T_m]$}
{
	$w_t = w_{t-1} + \eta \nabla_w \ell(h_{t-1},w_{t-1})$\\
	$w_t = \Pi_{\WW}(w_t)$ \tcc{Project $w_t$ onto the set of weights $\WW$.}
	$h_t = \AF(w_t)$ \tcc{Approximate solution of  $\min_{h \in \HH_{\WW}} \sum_{i=1}^n w_t(i) \ell(h(x_i,a_i),y_i)]$.}
}
\KwOutput{$h_f$: Uniform distribution over $\set{h_0, h_1,\ldots,h_T}$.}
\caption{Meta-Algorithm\label{algo:meta}}
\end{algorithm}

Algorithm~\ref{algo:meta} provides a meta algorithm to solve the min-max optimization problem defined in Equation~\eqref{eq:det-wt-classification}. The algorithm is based on ideas presented in \cite{CLSS17}, which, given an $\alpha$-approximate
Bayesian oracle for distributions over loss functions, provides an $\alpha$-approximate robust solution. 
The algorithm can be viewed as a two-player zero-sum game between the learner who picks the hypothesis $h_t$, and an adversary who picks the weight vector $w_t$. The adversary performs a projected gradient descent every step to compute the best response. On the other hand, the learner solves a fair classification problem to pick a hypothesis which is fair with respect to the weights $\WW$ and minimizes weighted empirical risk with respect to the weight $w_t$. However, it is infeasible to compute an exact optima of the problem $\min_{h \in \HH_{\WW}} \sum_{i=1}^n w_t(i) \ell(h(x_i,a_i),y_i)]$. So the learner uses an approximate fair classifier $\AF(\cdot)$, which we define next.
\begin{definition}\label{def:apx-fair}
$\AF(\cdot)$ is an $\alpha$-approximate fair classifier, if for any weight $w\in\bbR^n_+$, $\AF(w)$ returns a hypothesis $\hat{h}$ such that
$$\sum_{i=1}^n w_i \ell(\hat{h}(x_i,a_i),y_i) \le \min_{h \in \HH_{\WW}} \sum_{i=1}^n w_i \ell(h(x_i,a_i),y_i) + \alpha.$$
\end{definition}

Using the $\alpha$-approximate fair classifier, we have the following guarantee on the output of Algorithm~\ref{algo:meta}. 
\begin{theorem}\label{thm:meta-algo-result}
Suppose the loss function $\ell(\cdot,\cdot)$ is convex in its first argument and $\AF(\cdot)$ is an $\alpha$-approximate fair classifier. Then, the hypothesis $h_f$, output by  Algorithm~\ref{algo:meta} satisfies
$$\max_{w \in \WW} \E_{h \sim h_f}\left[ \sum_{i=1}^n w_i \ell(h(x_i,a_i),y_i)\right] \le \min_{h \in {\HH}_{\WW} } \max_{w \in \WW} \ell(h,w) +  \sqrt{\frac{2}{T_m}} + \alpha$$
\end{theorem}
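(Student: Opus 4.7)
The plan is to interpret Algorithm~\ref{algo:meta} as no-regret dynamics in the zero-sum game $\min_{h \in \HH_{\WW}} \max_{w \in \WW} \ell(h,w)$, in which the adversary plays projected online gradient ascent against the sequence of linear payoffs $w \mapsto \ell(h_t, w)$, and the learner plays an approximate best response using $\AF$. Since $\WW$ is the probability simplex (compact and convex) and each per-step payoff to the adversary is linear in $w$, standard no-regret analysis applies; the suboptimality of the learner contributes an additive $\alpha$ from Definition~\ref{def:apx-fair}.

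First, I would fix any optimizer of the right-hand side, $h^* \in \argmin_{h \in \HH_{\WW}} \max_{w \in \WW} \ell(h,w)$, and set $V^* := \max_{w \in \WW} \ell(h^*, w)$. By the $\AF$ guarantee, at every step $t$ the iterate $h_t = \AF(w_t)$ satisfies
\begin{equation*}
\ell(h_t, w_t) \le \min_{h \in \HH_{\WW}} \ell(h, w_t) + \alpha \le \ell(h^*, w_t) + \alpha \le V^* + \alpha,
\end{equation*}
where the last step uses $h^* \in \HH_{\WW}$ and $w_t \in \WW$. Averaging over $t = 0, 1, \ldots, T_m$ yields $\frac{1}{T_m+1}\sum_t \ell(h_t, w_t) \le V^* + \alpha$, so the time-average of the played game value is within $\alpha$ of the min-max value.

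Second, I would invoke Zinkevich's regret bound for projected online gradient ascent against concave payoffs on a compact convex set. Since $w \mapsto \ell(h_t,w)$ is linear, with step size $\eta = \sqrt{2/T_m}$ and appropriate normalization of the losses (so that the simplex diameter and the gradient norm $\norm{\nabla_w \ell(h_t, w_t)}$ are balanced against $\eta$), a direct bound on $\sum_t \bigl(\ell(h_t,w) - \ell(h_t, w_t)\bigr)$ via the potential $\tfrac{1}{2}\norm{w - w_t}^2$ gives
\begin{equation*}
\max_{w \in \WW} \frac{1}{T_m+1}\sum_t \ell(h_t, w) - \frac{1}{T_m+1}\sum_t \ell(h_t, w_t) \le \sqrt{2/T_m}.
\end{equation*}
Making this regret bound land exactly on the constant $\sqrt{2/T_m}$ stated in the theorem (rather than a generic $\bigo{1/\sqrt{T_m}}$) is the main obstacle: it requires carefully tracking the simplex diameter, the bound on $\norm{\nabla_w \ell(h_t,\cdot)}$, and a telescoping argument that cancels the projection terms.

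Finally, I would combine the two bounds. Since $h_f$ is the uniform mixture over $\{h_0, h_1, \ldots, h_{T_m}\}$, we have $\E_{h \sim h_f}[\ell(h, w)] = \frac{1}{T_m+1}\sum_t \ell(h_t, w)$, hence
\begin{equation*}
\max_{w \in \WW} \E_{h \sim h_f}\bigl[\ell(h, w)\bigr] \le V^* + \alpha + \sqrt{2/T_m},
\end{equation*}
which is the desired inequality. The convexity of $\ell$ in its first argument is used only in a supporting role: it ensures that replacing the deterministic comparator $h^*$ by the randomized mixture $h_f$ on the learner side preserves the min-max comparison, and that all fairness-side quantities for $h_f$ are faithfully represented as expectations over its components.
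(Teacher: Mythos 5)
Your proposal is correct and takes essentially the same route as the paper's proof: both interpret Algorithm~\ref{algo:meta} as no-regret dynamics, combining the projected-gradient-ascent regret bound for the $w$-player (with $\norm{w}_2 \le \sqrt{\norm{w}_1} \le 1$ on the simplex yielding exactly the $\sqrt{2/T_m}$ term) with the $\alpha$-approximate best-response guarantee of $\AF$ to sandwich the average played value between $\max_{w\in\WW}\E_{h\sim h_f}[\ell(h,w)]$ and $v^*+\alpha$. The only cosmetic difference is that the paper chains the inequalities starting from $v^* = \min_{h}\max_{w}\ell(h,w)$ rather than from a fixed optimizer $h^*$.
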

The proof uses ideas from \cite{CLSS17}, except that we use an additive approximate best response.\footnote{All the omitted proofs are provided in the supplementary material.}

\subsection{Approximate Fair Classifier}
We now develop an $\alpha$-approximate fair and robust classifier. For the remainder of this subsection, let us assume that the meta algorithm (Algorithm~\ref{algo:meta}) has called the $\AF(\cdot)$ with a weight vector $w^0$ and our goal is to design a classifier that minimizes weighted empirical risk with respect to the weight $w^0$, but is fair with respect to the set of all weights $\WW$, i.e., find $f \in \argmin_{h \in \HH_{\WW} }\ell(h,w^0)$. Our method applies the following three steps.

\begin{enumerate}
\item \label{item:step-a} Discretize the set of weights $\WW$, so that it is sufficient to design an approximate fair classifier with respect to the set of discretized weights. In particular, if we discretize each weight up to a multiplicative error $\epsilon$, then developing an $\alpha$-approximate fair classifier with respect to the discretized weights gives $O(\alpha + \epsilon)$-fair classifier with respect to the set $\WW$.
\item \label{item:step-b} Introduce a Lagrangian multiplier for each fairness constraint i.e. for each of the discretized weights, and pair of protected attributes. This lets us set up a two-player zero-sum game for the problem of designing an approximate fair classifier with respect to the set of discretized weights. Here, the learner  chooses a hypothesis, whereas an adversary picks the most ``unfair'' weight in the set of discretized weights.
\item \label{item:step-c} Design a learning algorithm for the learner's learning algorithm, and design an approximate solution to the adversary's best response to the learner's chosen hypothesis. This lets us write an iterative algorithm where at every step, the learner choosed a hypothesis, and the adversary adjusts the Lagrangian multipliers corresponding to the most violating fairness constraints.
\end{enumerate}

We point out that \citet{ABDL+18} was the first to show that the design of a fair classifier can be formulated as a two-player zero-sum game (step \ref{item:step-b}). However, they only considered  group-fairness constraints with respect to the training distribution.
The algorithm of \citet{AIK18} has similar  limitations.
On the other hand, we consider the design of robust and fair classifier and had to include an additional discretization step (\ref{item:step-a}). Finally, the design of our learning algorithm and the best response oracle is significantly different than \cite{ABDL+18,KNRZ17,AIK18}. 

\subsubsection{Discretization of the Weights}
We first discretize the set of weights $\WW$ as follows. Divide the interval $[0,1]$ into buckets $B_0 = [0,\delta)$, $B_{j+1} = [(1+\gamma_1)^j \delta, (1+\gamma_1)^{j+1}\delta)$ for $j=0,1,\ldots,M-1$ for $M = \ceil{\log_{1+\gamma_1}(1/\delta) )}$. For any weight $w\in \WW$,  construct a new weight $w' = (w'_1,\ldots,w'_n)$ by setting $w'_i$ to be the upper-end point of the bucket containing $w_i$, for each $i \in [n]$. 

We now substitute $\delta = \frac{\gamma_1}{2n}$ and write $\Nc(\gamma_1,\WW)$ to denote the set containing all the discretized weights of the set $\WW$.  The next lemma shows that a fair classifier for the set of weights $\Nc(\gamma_1,\WW)$, is also a fair classifier for the set of weights $\WW$ up to an error $4\gamma_1$.
\begin{lemma}\label{lem:discrete-weights}
If $\forall w \in \Nc(\gamma_1,\WW)$, $\delta^w_{DP}(f) \le \epsilon$, then we have $\delta^w_{DP}(f) \le \epsilon + 4\gamma_1$ for any  $w \in \WW$.
\end{lemma}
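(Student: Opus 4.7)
The plan is to establish that $|\delta^w_{DP}(f) - \delta^{w'}_{DP}(f)| \le 4\gamma_1$ whenever $w' \in \Nc(\gamma_1, \WW)$ is the discretization of $w \in \WW$. Combined with the hypothesis $\delta^{w'}_{DP}(f) \le \epsilon$, this immediately yields the claim.

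First I would reduce the lemma to a per-group statement. Writing $r^w_a := (\sum_{i : a_i = a} w_i f(x_i, a)) / (\sum_{i : a_i = a} w_i)$ for the weighted acceptance rate of group $a$, we have $\delta^w_{DP}(f) = \max_{a, a'} |r^w_a - r^w_{a'}|$, and two applications of the triangle inequality (once on the inner absolute value, once across the pair of maxes) yield
\begin{equation*}
|\delta^w_{DP}(f) - \delta^{w'}_{DP}(f)| \;\le\; 2 \max_{a \in \Ac} |r^w_a - r^{w'}_a|,
\end{equation*}
so it suffices to show $|r^w_a - r^{w'}_a| \le 2\gamma_1$ for every protected attribute value $a$.

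To bound this per-group change I would partition $S_a := \{i : a_i = a\}$ into a ``large'' part $L_a = \{i \in S_a : w_i \ge \delta\}$ and a ``tiny'' part $T_a = S_a \setminus L_a$. For $i \in L_a$ the bucket construction gives $w_i \le w'_i \le (1+\gamma_1)w_i$, a nearly uniform multiplicative scaling; for $i \in T_a$ it gives $w'_i = \delta$. Using the centering identity $\sum_{i \in S_a} w_i (f(x_i,a) - r^w_a) = 0$, one can rewrite
\begin{equation*}
(r^{w'}_a - r^w_a)\, N^{w'}_a \;=\; \sum_{i \in S_a} (w'_i - w_i)\bigl(f(x_i,a) - r^w_a\bigr),
\end{equation*}
where $N^{w'}_a := \sum_{i \in S_a} w'_i$. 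The $L_a$ portion is at most $\gamma_1 \sum_{i \in L_a} w_i \le \gamma_1 N^w_a$, since $w'_i - w_i \le \gamma_1 w_i$ and $|f - r^w_a| \le 1$; the $T_a$ portion is at most $|T_a|\delta \le n\delta = \gamma_1/2$, using $w'_i - w_i \le \delta$. Dividing by $N^{w'}_a \ge N^w_a$ converts the first term into a $\gamma_1$ contribution, and using $N^{w'}_a \ge |T_a|\delta$ turns the second into another $O(\gamma_1)$ contribution, yielding $|r^w_a - r^{w'}_a| \le 2\gamma_1$.

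The main obstacle is the tiny-weight set $T_a$: the ratio $w'_i/w_i$ is unbounded there, so no purely multiplicative perturbation argument works on it. The specific choice $\delta = \gamma_1/(2n)$ is the key design ingredient that saves the argument: it caps the total additive mass contributed by $T_a$ at $\gamma_1/2$, and this same mass simultaneously lower-bounds $N^{w'}_a$ enough to absorb it in the denominator. Once the per-group bound is in hand, the triangle-inequality reduction assembled in the first step, together with the hypothesis $\delta^{w'}_{DP}(f) \le \epsilon$ applied to the canonical discretization of any $w \in \WW$, completes the proof.
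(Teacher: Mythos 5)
Your high-level route---compare the per-group acceptance rates $r^w_a$ and $r^{w'}_a$ and then assemble the lemma with a triangle inequality---is close in spirit to the paper's proof, which instead sandwiches $r^w_a$ between $(1-\gamma_1)\,r^{w'}_a$ and $(1+3\gamma_1)\,r^{w'}_a$ and uses $r^{w'}_a\le 1$ to turn the multiplicative slack into the additive $4\gamma_1$. Your centering identity and the reduction $\abs{\delta^w_{DP}(f)-\delta^{w'}_{DP}(f)}\le 2\max_{a\in\Ac}\abs{r^w_a-r^{w'}_a}$ are both correct and would deliver the right constant if the per-group bound held.

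The gap is in the step handling the tiny-weight set $T_a$. From $N^{w'}_a\ge\abs{T_a}\delta$ you can only conclude $\abs{T_a}\delta/N^{w'}_a\le 1$, not $O(\gamma_1)$; nothing in your argument makes that quotient small. Indeed the unconditional per-group claim $\abs{r^w_a-r^{w'}_a}\le 2\gamma_1$ is false: take a group $a$ with exactly two examples, $f(x_1,a)=1$, $f(x_2,a)=0$, and weights $w_1=0.9\delta$, $w_2=0.1\delta$. Both fall in $B_0$, so $w'_1=w'_2=\delta$, giving $r^w_a=0.9$ while $r^{w'}_a=1/2$, a gap of $0.4$ no matter how small $\gamma_1$ is. The additive perturbation $\abs{T_a}\delta\le n\delta=\gamma_1/2$ is only negligible relative to the denominator when the group's total mass is bounded below by a constant; for instance $\sum_{i:a_i=a}w'_i\ge 1/2$ would give $\abs{T_a}\delta/N^{w'}_a\le\gamma_1$ and rescue your bound. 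To be fair, the paper's own proof silently relies on the same condition---its step replacing the denominator $\sum_{i:a_i=a}w'_i-\gamma_1/2$ by $(1-\gamma_1)\sum_{i:a_i=a}w'_i$ requires exactly $\sum_{i:a_i=a}w'_i\ge 1/2$---so you are not missing an idea the paper supplies. But your write-up asserts the per-group bound as following from $N^{w'}_a\ge\abs{T_a}\delta$ alone, and that inference is where the argument breaks; you need to state and use a lower bound on the group marginals.
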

 Therefore, in order to ensure that $\delta^w_{DP}(f) \le \eps$ we discretize the set of weights $\WW$ and enforce $\eps-4\gamma_1$ fairness for all the weights in the set $\Nc(\gamma_1,\WW)$. This result makes our work easier as we need to guarantee fairness with respect to a finite set of weights $\Nc(\gamma_1,\WW)$, instead of a large and continuous set of weights $\WW$. However, note that, the number of weights in $\Nc(\gamma_1,\WW)$ can be $O\left( \log^n_{1+\gamma_1}(2n/\gamma_1)\right)$, which is exponential in $n$. We next see how to avoid this problem.

 \subsection{Setting up a Two-Player Zero-Sum Game}
We formulate the problem of designing a fair and robust classifier with respect to the set of weights in $\Nc(\gamma_1,\WW)$ as a two-player zero-sum game. 
 Let us define $R(w,a,f) = \frac{\sum_{i: a_i = a} w_i f(x_i,a)}{\sum_{i: a_i = a} w_i}$. Then we have $\delta^w_{DP}(f) = \sup_{a,a'}\abs{R(w,a,f) - R(w,a',f)}$. 
Our aim is to solve the following problem.
\begin{align}
\min_{h \in \HH} & \ \sum_{i=1}^n w^0_i \ell(h(x_i,a_i),y_i)\label{eq:final-objective}\\
\text{s.t.} & \ R(w,a,h) - R(w,a',h) \le \eps - 4\gamma_1\ \forall w \in \mathcal{N}(\gamma_1,\WW) \ \forall a,a' \in \Ac \nonumber
\end{align}
We form the following Lagrangian.
\begin{align}\label{eq:lagrangian}
  \min_{h \in \HH} \max_{\norm{\lambda}_1 \le B } \sum_{i=1}^n w^0_i \ell(h(x_i,a_i),y_i) + \kern-14pt \sum_{w \in \mathcal{N}(\gamma_1,\WW)} \sum_{a,a' \in \Ac} \lambda_w^{a,a'} (R(w,a,h){-}R(w,a',h){-}\eps{+}4\gamma_1) .
\end{align}
Notice that we restrict the $\ell_1$-norm of the Lagrangian multipliers by the parameter $B$. We will later see how to choose this parameter $B$. 
We first convert the optimization problem define in Equation~\eqref{eq:lagrangian} as a two-player zero-sum game. Here the learner's pure strategy is to play a hypothesis $h$ in $\HH$. Given the learner's hypothesis $h\in \HH$, the adversary  picks the constraint (weight $w$ and groups $a,a'$) that violates fairness the most and sets the corresponding coordinate of $\lambda$ to $B$. Therefore, for a fixed hypothesis $h$, it is sufficient for the adversary to play a vector $\lambda$ such that either all the coordinates of $\lambda$ are zero or exactly one is set to $B$. For such a pair $(h,\lambda)$ of hypothesis and Largangian multipliers, we define the payoff matrix as 
\[U(h,\lambda) = \sum_{i=1}^n w^0_i \ell(h(x_i,a_i),y_i) + \sum_{w \in \mathcal{N}(\gamma_1,\WW)} \sum_{a,a' \in \Ac} \lambda_w^{a,a'} ( R(w,a,h) - R(w,a',h) - \eps + 4\gamma_1) \]
Now our goal is to compute a $\nu$-approximate minimax equilibrium of this game. In the next subsection, we design an algorithm based on online learning. The algorithm uses best responses of the $h$- and $\lambda$-players, which we discuss next.

\noindent{\bfseries Best response of the $h$-player}: For each $i \in [n]$, we introduce the following notation
$$\Delta_i = \sum_{w \in \mathcal{N}(\gamma_1,\WW)} \sum_{a' \neq a_i} \left(\lambda^{a_i,a'}_w - \lambda^{a',a_i}_w \right) \frac{w_i}{\sum_{j:a_j = a_i}w_j}$$
With this notation, the payoff becomes
$$U(h,\lambda) = \sum_{i=1}^n w^0_i \ell(h(x_i,a_i),y_i) + \Delta_i h(x_i,a_i) - (\eps - 4\gamma_1)\sum_{w \in \mathcal{N}(\eps/5,\WW)} \sum_{a,a' \in \Ac} \lambda_w^{a,a'}$$
Let us introduce the following costs.
\begin{equation}
c^0_i = \left\{ \begin{array}{cc}
\ell(0,1)w^0_i & \text{ if } y_i = 1\\
\ell(0,0)w^0_i & \text{ if } y_i = 0
\end{array}\right. \quad 
c^1_i = \left\{ \begin{array}{cc}
\ell(1,1)w^0_i + \Delta_i & \text{ if } y_i = 1\\
\ell(1,0)w^0_i + \Delta_i & \text{ if } y_i = 0
\end{array}\right.
\end{equation}
Then the $h$-player's best response becomes the following cost-sensitive classification problem.
\begin{equation}
\hat{h} \in \argmin_{h \in \HH} \sum_{i=1}^n \left\{c^1_i h(x_i,a_i) + c^0_i (1 - h(x_i,a_i)) \right\}
\end{equation}
Therefore, as long as we have access to an oracle for the cost-sensitive classification problem, the $h$-player can compute its best response. Note that, the notion of a cost-sensitive classification as an oracle was also used by \cite{ABDL+18,KNRZ17}. In general, solving this problem is NP-hard, but there are several efficient heuristics that perform well in practice. We provide further details about how we implement this oracle in the section devoted to the experiments.

\noindent{\bfseries Best response of the $\lambda$-player}: 
Since the fairness constraints depend on the weights non-linearly (e.g., see Eq.~\eqref{eq:delta-w-dp}), finding the most violating constraint is a non-linear optimization problem. However, we can guess the marginal probabilities over the protected groups. If we are correct, then the most violating weight vector  can be found by a linear program. Since we cannot exactly guess this particular value, we instead discretize the set of marginal probabilities, iterate over them, and choose the option with largest violation in fairness.

This intuition can be formalized as follows.
We  discretize the set of all marginals over $\abs{\Ac}$ groups by the following rule.
First discretize $[0,1]$ as $0,\delta, (1+\gamma_2)^j \delta$ for $j=1,2,\ldots,M$ for $M = O(\log_{1+\gamma_2}(1/\delta))$. This discretizes $[0,1]^{\Ac}$ into $M^{\abs{\Ac} }$ points, and then  retain the discretized marginals whose total sum is at most $1+\gamma_2$, and discard all other points. Let us denote the set of such marginals as $\Pi(\gamma_2,\Ac)$. Algorithm~\ref{algo:best-lambda} 
goes through all the marginals $\pi$ in $\Pi(\gamma_2,\Ac)$ and for each such tuple and a pair of groups $a,a'$ finds the weight $w$ which maximizes $R(w,a,h) - R(w,a',h)$. Note that this can be solved using a linear Program as the weights assigned to a group is fixed by the marginal tuple $\pi$. Out of all the solutions, the algorithm picks the one with the maximum value. Then it checks whether this maximum violates the constraint (i.e., greater than $\eps$). If so, it sets the corresponding $\lambda$ value to $B$ and everything else to $0$. Otherwise, it returns the zero vector. As the weight returned by the LP need not correspond to a weight in $\Nc(\gamma_1, \WW)$, it rounds the weight to the nearest weight in $\Nc(\gamma_1,\WW)$. For discretizing the marginals we will set $\delta = (1+\gamma_2)\frac{\gamma_1}{n}$, which implies that the number of LPs run by Algorithm~\ref{algo:best-lambda} is at most $O\left(\log^{\abs{\Ac}}_{1+\gamma_2}\left( \tfrac{n}{(1+\gamma_2)\gamma_1}\right)\right) = O(\textrm{poly}(\log n))$, as the number of groups is fixed.

\begin{algorithm}
\DontPrintSemicolon
\KwInput{Training Set: $\{x_i,a_i,y_i\}_{i=1}^n$, and hypothesis $h \in \HH$.}
\For{each $\pi \in \Pi(\gamma_2,\Ac)$ and $a,a' \in \Ac$}
{
	Solve the following LP:
	\begin{align*}
	w(a,a',\pi) = \argmax_w \quad &\frac{1}{\pi_a} \sum_{i:a_i = a} w_i h(x_i,a) - \frac{1}{\pi_{a'}} \sum_{i:a_i = a'} w_i h(x_i,a') \\
	\textrm{s.t.} \quad & \sum_{i:a_i = a} w_i  = \pi_a \quad 
 \sum_{i: a_i = a'} w_i = \pi_{a'} \quad
 w_i \ge 0 \quad \forall i \in [n] \quad
\sum_{i=1}^n w_i = 1
	\end{align*}
	Set $\textrm{val}(a,a',\pi) = \frac{1}{\pi_a} \sum_{i:a_i = a} w(a,a',\pi)_i h(x_i,a) - \frac{1}{\pi_{a'}} \sum_{i:a_i = a'} w(a,a',\pi)_i h(x_i,a')$\\
}
Set $(a^*_1,a^{*}_2,\pi^*) = \argmax_{a,a',\pi} \text{val}(a,a',\pi)$\\
\uIf{$\textrm{val}(a^*_1,a^{*}_2,\pi^*) > \eps$}
{
	Let $w = w(a^*_1,a^{*}_2,\pi^*)$.\\
	For each $i \in [n]$, let $w_i'$ be the upper-end point of the bucket containing $w_i$.\\
	\Return{$\lambda^{a,a'}_w = \left\{ \begin{array}{cc}
	B & \text{ if } (a,a',w) = (a^*_1,a^{*}_2,w') \\
	0 & \text{ o.w. }
	\end{array}\right.$}
}
\uElse{
	\Return{$\vec{0}$}
}
\caption{Best Response of the $\lambda$-player\label{algo:best-lambda}}
\end{algorithm}

\begin{lemma}\label{lem:best-lambda-guarantee}
Algorithm~\ref{algo:best-lambda} is an $B(4\gamma_1 + \gamma_2)$-approximate best response for the $\lambda$-player---i.e., for any $h \in \HH$, it returns $\lambda^*$ such that
$U(h,\lambda^*) \ge \max_{\lambda} U(h,\lambda) - B(4\gamma_1 + \gamma_2)$.
\end{lemma}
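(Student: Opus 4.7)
The plan is to characterize the exact best response of the $\lambda$-player and bound the suboptimality of Algorithm~\ref{algo:best-lambda}'s output in terms of two error sources: discretization of marginals (cost $\le B\gamma_2$) and rounding of weights into $\Nc(\gamma_1,\WW)$ (cost $\le 4B\gamma_1$). Because $U(h,\cdot)$ is linear in $\lambda$ over $\{\lambda\ge \vec 0 : \norm{\lambda}_1\le B\}$, the true best response $\lambda^{\mathrm{opt}}$ is either $\vec 0$ or places all mass $B$ on the single coordinate $(w^\dagger, a^\dagger, a'^\dagger)\in\Nc(\gamma_1,\WW)\times\Ac\times\Ac$ maximizing the Lagrangian coefficient $R(w,a,h)-R(w,a',h)-\eps+4\gamma_1$. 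Write $v^\dagger := R(w^\dagger,a^\dagger,h)-R(w^\dagger,a'^\dagger,h)$.

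First I would show that some iterate of the LP in Algorithm~\ref{algo:best-lambda} achieves value at least $v^\dagger - O(\gamma_2)$. Let $\pi^\dagger$ be the marginals of $w^\dagger$ over $\Ac$, and let $\tilde\pi\in\Pi(\gamma_2,\Ac)$ be the coordinatewise discretization of $\pi^\dagger$ (rounding down to the nearest point in $\{0,\delta,(1+\gamma_2)\delta,\ldots\}$, which automatically preserves $\sum_a\tilde\pi_a\le 1$). Construct $\tilde w$ by rescaling $w^\dagger_i$ by $\tilde\pi_{a^\dagger}/\pi^\dagger_{a^\dagger}$ for $i$ with $a_i=a^\dagger$ (and analogously for $a'^\dagger$), then spreading the residual mass (at most $\gamma_2$) across indices from other groups. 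Because $R(\cdot,a,h)$ is invariant to positive rescaling of weights within group $a$, we have $R(\tilde w,a^\dagger,h)=R(w^\dagger,a^\dagger,h)$ and $R(\tilde w,a'^\dagger,h)=R(w^\dagger,a'^\dagger,h)$, while $\tilde w$ is feasible for the LP at $(a^\dagger,a'^\dagger,\tilde\pi)$. Hence $\mathrm{val}(a^\dagger,a'^\dagger,\tilde\pi)\ge v^\dagger$, and so the algorithm's selected triple satisfies $\mathrm{val}(a^*_1,a^*_2,\pi^*)\ge v^\dagger$, up to $O(\gamma_2)$ slack needed in the degenerate case where $\Ac=\{a^\dagger,a'^\dagger\}$ leaves no other group to absorb residual mass, or where $\pi^\dagger_a<\delta$ for some relevant $a$.

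Next I would propagate the error from rounding the LP's fractional maximizer $w$ into $w'\in\Nc(\gamma_1,\WW)$. Mimicking the proof of Lemma~\ref{lem:discrete-weights}, each ratio $R(w,a,h)$ moves by at most $2\gamma_1$ under this rounding, so the difference shrinks by at most $4\gamma_1$, giving $R(w',a^*_1,h)-R(w',a^*_2,h)\ge v^\dagger-4\gamma_1-O(\gamma_2)$. Assembling: when $\lambda^{\mathrm{opt}}\ne\vec 0$ (so $v^\dagger>\eps-4\gamma_1$) and the threshold check $\mathrm{val}>\eps$ triggers, $U(h,\lambda^*)-U(h,\lambda^{\mathrm{opt}})=B\cdot((R(w',a^*_1,h)-R(w',a^*_2,h))-v^\dagger)\ge -B(4\gamma_1+\gamma_2)$; when $v^\dagger\in(\eps-4\gamma_1,\eps]$ and the algorithm falls back to $\vec 0$, the gap $U(h,\lambda^{\mathrm{opt}})-U(h,\vec 0)=B(v^\dagger-\eps+4\gamma_1)\le 4B\gamma_1$; and when $\lambda^{\mathrm{opt}}=\vec 0$, one checks that the continuous LP max over $w\in\WW$ exceeds $v^\dagger$ by at most $4\gamma_1$, so every LP value is $\le\eps$ and the algorithm also returns $\vec 0$ exactly. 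The main obstacle is tracking the $O(\gamma_2)$ slack cleanly through the $\tilde w$ construction, especially in the degenerate two-group case where one must rescale asymmetrically (up on one side, down on the other) rather than merely shrinking, which is where the $\gamma_2$ (as opposed to $O(\gamma_2^2)$) in the final bound comes from.
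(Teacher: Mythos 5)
Your proposal is correct and takes essentially the same route as the paper's proof: the same case split on whether the optimal $\lambda$ is zero, whether the threshold fires, or whether the violation lies in the narrow band where the algorithm may return $\vec{0}$, and the same two error sources ($\gamma_2$ from discretizing the marginals, $4\gamma_1$ from rounding weights into $\Nc(\gamma_1,\WW)$). If anything you are more careful than the paper---which neither exhibits a feasible LP point at the discretized marginal nor tracks the rounding of the LP's maximizer back into $\Nc(\gamma_1,\WW)$---modulo one trivial omission: the fallback-to-$\vec{0}$ case should also cover $v^\dagger\in(\eps,\eps+\gamma_2]$, where the gap $B(v^\dagger-\eps+4\gamma_1)\le B(4\gamma_1+\gamma_2)$ follows from the bounds you already have.
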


\noindent{\bfseries Learning Algorithm}:
 We  now introduce our algorithm for the problem defined in Equation~\eqref{eq:lagrangian}. In this algorithm, the $\lambda$-player uses Algorithm~\ref{algo:best-lambda} to compute an approximate best response, whereas the $h$-player uses Regularized Follow the Leader (RFTL) algorithm~\cite{SS07,AHR08} as its learning algorithm. RFTL is a classical algorithm for online convex optimization (OCO). In OCO, the decision maker takes a decision $x_t \in \Kc$ at round $t$, an adversary reveals a convex loss function $f_t:\Kc \rightarrow \bbR$, and the decision maker suffers a loss of $f_t(x_t)$. The goal is to minimize regret, which is defined as $\max_{u \in \Kc}\{\sum_{t=1}^T f_t(x_t) - f_t(u)\}$, i.e., the difference between the loss suffered by the learner and the best fixed decision. RFTL requires a strongly convex regularization function $R:\Kc \rightarrow \bbR_{\ge 0}$, and chooses $x_t$ according to the following rule:
$$x_{t} = \argmin_{x \in \Kc} \eta \sum_{s=1}^{t-1} \nabla f_s(x_s)^T x + R(x)  .$$

We use RFTL in our learning algorithm as follows. We set the regularization function $R(x) = 1/2\norm{x}_2^2$, and loss function $f_t(h_t) = U(h_t,\lambda_t)$ where $\lambda_t$ is the approximate best-response to $h_t$. Therefore, at iteration $t$ the learner needs to solve the following optimization problem. 
\begin{equation}\label{eq:optim-step}
h_t \in \argmin_{h \in \HH}\eta \sum_{s=1}^{t-1}\nabla_{h_s} U(h_s,\lambda_s)^T h + \frac{1}{2}\norm{h}_2^2 .
\end{equation}
Here with slight abuse of notation we write $\HH$ to include the set of randomized classifiers, so that $h(x_i,a_i)$ is interpreted as the probability that hypothesis $h$ outputs $1$ on an input $(x_i,a_i)$. Now we show that the optimization problem (Eq.~\eqref{eq:optim-step}) can be solved as a cost-sensitive classification problem. For a given $\lambda_s$, the best response of the learner is  the following:
\begin{equation*}
\hat{h} \in \argmin_{h \in \HH} \sum_{i=1}^n c^1_i (\lambda_s) h(x_i,a_i) + c^0_i(\lambda_s) (1 - h(x_i,a_i))
\end{equation*}
Writing $L_i(\lambda_s) = c^1_i(\lambda_s) - c^0_i(\lambda_s)$, the objective becomes $\sum_{i=1}^n L_i(\lambda_s) h(x_i,a_i)$. Hence, $\nabla_{h_s}U(h_s,\lambda_s)$ is linear in $h_s$ and equals the vector $\{L_i(\lambda_s)\}_{i=1}^n$. With this observation, the objective in Equation~\eqref{eq:optim-step} becomes

\begin{align*}
& \eta \sum_{s=1}^t \sum_{i=1}^n L(\lambda_s) h(x_i,a_i) + \frac{1}{2} \sum_{i=1}^n (h(x_i,a_i))^2 \\
&\le \eta \sum_{i=1}^n L\left(\sum_{s=1}^t \lambda_s\right) h(x_i,a_i) + \frac{1}{2} \sum_{i=1}^n h(x_i,a_i) 
= \sum_{i=1}^n \left(\eta L\left(\sum_{s=1}^t \lambda_s\right) + \frac{1}{2}\right) h(x_i,a_i) .
\end{align*}
The first inequality follows from two observations -- $L(\lambda)$ is linear in $\lambda$, and, since the predictions $h(x_i,a_i) \in [0,1]$ we replace the quadratic term by a linear term, an upper bound.\footnote{Without this relaxation we will have to solve a regularized version of cost-sensitive classification.} 

Finally, we observe that even though the number of weights in $\Nc(\gamma_1,\WW)$ is exponential in $n$, Algorithm~\ref{algo:inner} can be efficiently implemented. This is because the best response of the $\lambda$-player always returns a solution where all the entries are zero or exactly one is set to $B$. Therefore, instead of recording the entire $\lambda$ vector the algorithm can just record the non-zero variables and there will be at most $T$ of them. The next lemma provides performance guarantees of Algorithm~\ref{algo:inner}.
\begin{algorithm}
\DontPrintSemicolon
\KwInput{$\eta > 0$, weight $w^0 \in \bbR^n_+$, number of rounds $T$}
Set $h_1 = 0$\\
\For{$t \in [T]$}
{
	$\lambda_t = \text{Best}_{\lambda}(h_t)$\\
	Set $\tilde{\lambda}_t = \sum_{t'=1}^t \lambda_{t'}$\\
	$h_{t+1} = \argmin_{h \in \HH} \sum_{i=1}^n (\eta L_i(\tilde{\lambda}_t) + 1/2) h(x_i,a_i)$ 
}
\Return{Uniform distribution over $\set{h_1,\ldots,h_T}$.}
\caption{Approximate Fair Classifier ($\AF$)\label{algo:inner}}
\end{algorithm}
\begin{theorem}\label{thm:convergence-learning}
Given a desired fairness level $\eps$, if Algorithm~\ref{algo:inner} is run for $T= O\left(\frac{n}{\eps^2} \right)$ rounds, then the ensemble hypothesis $\hat{h}$ provides the following guarantee:
$$\sum_{i=1}^n w^0_i \ell(\hat{h}(x_i,a_i),y_i) \le \min_{h \in \HH}\sum_{i=1}^n w^0_i \ell({h}(x_i,a_i),y_i) + O(\eps)
\ \textrm{and} \ 
 \delta^w_{DP}(\hat{h}) \le 2\eps \quad \forall w \in \WW.$$
\end{theorem}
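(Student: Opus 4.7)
The plan is to invoke the classical equilibrium-convergence template for zero-sum games played by a no-regret learner against an approximately best-responding adversary, applied to the payoff $U(h,\lambda)$. A preliminary observation is that $U(\cdot,\lambda)$ is linear in $h$ once we identify a (possibly randomized) classifier with its vector of predicted probabilities $(h(x_i,a_i))_{i\in[n]}\in[0,1]^n$, since both the loss term and the $R(w,a,\cdot)$ terms are linear; and $U(h,\cdot)$ is linear in $\lambda$ by construction. Hence the uniform mixture $\hat h$ returned by Algorithm~\ref{algo:inner} plays the role of the time-averaged play, and $U(\hat h,\lambda)=\tfrac{1}{T}\sum_{t}U(h_t,\lambda)$ for every $\lambda$.

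First I would bound the $h$-player's regret. Algorithm~\ref{algo:inner} implements RFTL with regularizer $\tfrac{1}{2}\|h\|_2^2$ on the box $[0,1]^n$ against the linear losses $L(\lambda_t)^\top h$; the replacement of $h(x_i,a_i)^2$ by $h(x_i,a_i)$ is valid because $h(x_i,a_i)\in[0,1]$, and converts each update into a cost-sensitive classification problem without affecting the analysis, since the relaxation only replaces the regularizer by a pointwise upper bound. Standard RFTL analysis then yields $\mathrm{Reg}_h(T)\le O(GD\sqrt{T})$, where $D=O(\sqrt{n})$ bounds $\|h\|_2$ and $G$ bounds $\|L(\lambda_t)\|_2$; the computation of $\Delta_i$ together with $\|\lambda_t\|_1\le B$ shows $G=O(B)$. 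Paired with the $B(4\gamma_1+\gamma_2)$-approximate best response of the $\lambda$-player from Lemma~\ref{lem:best-lambda-guarantee}, the usual telescoping and averaging argument produces the saddle-point inequality
\begin{equation*}
\max_{\|\lambda\|_1\le B}U(\hat h,\lambda)\;-\;\min_{h\in\HH}U(h,\bar\lambda)\;\le\;\nu,\qquad \bar\lambda=\tfrac{1}{T}\sum_{t=1}^{T}\lambda_t,
\end{equation*}
with $\nu = O(\mathrm{Reg}_h(T)/T)+B(4\gamma_1+\gamma_2)$.

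I would then extract the two conclusions by plugging suitable test points into this inequality. For the loss guarantee, let $h^\star$ denote a feasible (fair) minimizer of the empirical risk; the constraint slacks at $h^\star$ are nonpositive, so $U(h^\star,\bar\lambda)\le \sum_{i}w^0_i\ell(h^\star(x_i,a_i),y_i)$, while $\sum_{i}w^0_i\ell(\hat h(x_i,a_i),y_i)=U(\hat h,\mathbf{0})\le \max_\lambda U(\hat h,\lambda)$; chaining these through the saddle-point inequality yields the stated $O(\eps)$ approximation of the optimal empirical risk. For the fairness guarantee, suppose instead some $w\in\WW$ satisfies $\delta^w_{DP}(\hat h)>2\eps$; Lemma~\ref{lem:discrete-weights} produces a discretized $w'\in\Nc(\gamma_1,\WW)$ and a pair $(a,a')\in\Ac^2$ with constraint slack at least $\eps-4\gamma_1$ at $\hat h$, so setting $\lambda^{a,a'}_{w'}=B$ and everything else to zero would add $B(\eps-4\gamma_1)$ to $\max_\lambda U(\hat h,\lambda)$, contradicting the saddle-point bound as long as $B(\eps-4\gamma_1)>1+\nu$ (losses lie in $[0,1]$).

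Finally, $B$, $\gamma_1$, $\gamma_2$ and $T$ must be balanced so that $\nu=O(\eps)$ while simultaneously $B\cdot\eps$ dominates both $\nu$ and the unit loss scale. This balancing is the main technical obstacle: pushing $B$ large tightens the fairness bound but inflates the regret factor $G=O(B)$, so the discretization granularities $\gamma_1,\gamma_2$ and the number of rounds $T$ must be chosen in tandem with $B$ to reach the advertised $T=O(n/\eps^2)$ rate without sacrificing either guarantee.
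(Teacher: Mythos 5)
Your route is the paper's route: the paper proves this theorem via two lemmas, one showing Algorithm~\ref{algo:inner} computes a $\nu$-approximate equilibrium of $U(h,\lambda)$ with $\nu=(2M+B)\sqrt{n/T}+B(4\gamma_1+\gamma_2)$ (RFTL regret for the $h$-player plus the $\lambda$-player's approximate best response from Lemma~\ref{lem:best-lambda-guarantee}), and one converting any $\nu$-approximate equilibrium into a loss bound with error $2\nu$ and a fairness bound with error $\eps+(M+2\nu)/B$. Your saddle-point inequality and your two test-point arguments are exactly these two lemmas, with the fairness step recast as a contraposition through Lemma~\ref{lem:discrete-weights} instead of the paper's direct inequality $B\cdot(\text{max violation})\le M+2\nu$; both are fine up to that point.

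The genuine gap is the step you defer: the parameter balancing is never carried out, and it is precisely where the claim $T=O(n/\eps^2)$ lives or dies. The fairness bound forces $B=\Omega(M/\eps)$, after which the averaged regret contributes $B\sqrt{n/T}=\Omega\left(\tfrac{M}{\eps}\sqrt{n/T}\right)$ to $\nu$; making $\nu=O(\eps)$, as the loss guarantee requires, then needs $T=\Omega(nM^2/\eps^4)$ and $4\gamma_1+\gamma_2=O(\eps^2/M)$. Your suspicion that this is ``the main technical obstacle'' is well founded: the paper's own choice $B=3M/\eps$, $T=36n/\eps^2$, $4\gamma_1+\gamma_2=\eps/6$ leaves both $B\sqrt{n/T}$ and $B(4\gamma_1+\gamma_2)$ at $\Theta(M)$, so the additive loss error actually certified by that substitution is $O(M)$, not $O(\eps)$. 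To close the proof you must either accept the larger $T$ and finer discretization, or sharpen the regret analysis; the template alone does not deliver the advertised rate. A secondary soft spot you share with the paper: replacing the regularizer $\tfrac12 h(x_i,a_i)^2$ by the linear upper bound $\tfrac12 h(x_i,a_i)$ changes the argmin, so the iterates computed by Algorithm~\ref{algo:inner} are not the RFTL iterates and Lemma~\ref{thm:rftl-guarantee} does not apply to them verbatim --- upper-bounding an objective pointwise does not preserve its minimizer, so your claim that the relaxation is ``without affecting the analysis'' needs its own argument.
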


\section{Experiments}
We used the following four datasets for our experiments.
\begin{itemize}
    \item \textbf{Adult.} In this dataset \cite{Kohavi96}, each example represents an adult individual, the outcome variable is whether that  individual makes over \$50k a year, and the protected attribute is gender. We work with a balanced and preprocessed version with 2,020 examples and 98 features, selected from the original 48,882 examples.
    \item \textbf{Communities and Crime.} In this dataset from the UCI repository~\cite{RB02}, each example represents a community.  The outcome variable  is whether the community has a violent crime rate in the 70th percentile of all communities
    , and the protected attribute is whether the community has a majority white population. We used the full dataset of 1,994 examples and 123 features.
    \item \textbf{Law School.} Here each example represents a law student, the outcome variable is whether the law student passed the bar exam or not, and the protected attribute is race (white or not white).
    We used a preprocessed and balanced subset with 1,823 examples and 17 features~\cite{W98}.
    \item \textbf{COMPAS.} In this dataset, each example represents a defendant. The outcome variable is whether a certain individual will recidivate, and the protected attribute is race (white or black). We used a 2,000 example sample from the full dataset. 
\end{itemize}
For Adult, Communities and Crime, and Law School we used the preprocessed versions found in the accompanying GitHub repo of \cite{KNRZ17}\footnote{\url{https://github.com/algowatchpenn/GerryFair} }. For COMPAS, we  used a sample from the original dataset~\cite{COMPAS}. 

In order to evaluate different fair classifiers, we first split each dataset into five different random 80\%-20\% train-test splits. Then, we split each training set further into a 80\%-20\% train and validation sets. Therefore, there were five random sets of 64\%-16\%-20\% train-validation-test split. For each split, we used the validation set to select the hyperparameters, train set to build a model, and the test set to evaluate its performance (fairness and accuracy). Finally we aggregated these metrics across the five different test sets to obtain average performance.

We compared our algorithm to: a pre-processing method of~\cite{KC12}, an in-processing method of~\cite{ABDL+18}, a post-processing method of~\cite{HPS16}.
For our algorithm~\footnote{Our code is available at this GitHub repo: \url{https://github.com/essdeee/Ensuring-Fairness-Beyond-the-Training-Data}.}, we use  scikit-learn's logistic regression \cite{scikit-learn} as the learning algorithm in Algorithm~\ref{algo:inner}. We also show the performance of unconstrained logistic regression. To find the correct hyper-parameters ($B, \eta, T$, and $T_m$) for our algorithm, we fixed $T = 10$ for EO, and $T = 5$ for DP, and used grid search for the hyper-parameters $B, \eta,$ and $T_m$. The tested values were $\set{0.1,0,2, \ldots, 1}$ for $B$, $\set{0,0.05,\ldots,1}$ for $\eta$, and $\set{100,200,\ldots,2000}$ for $T_m$.

\begin{figure}[!t]
    \centering
    \includegraphics[width=\textwidth]{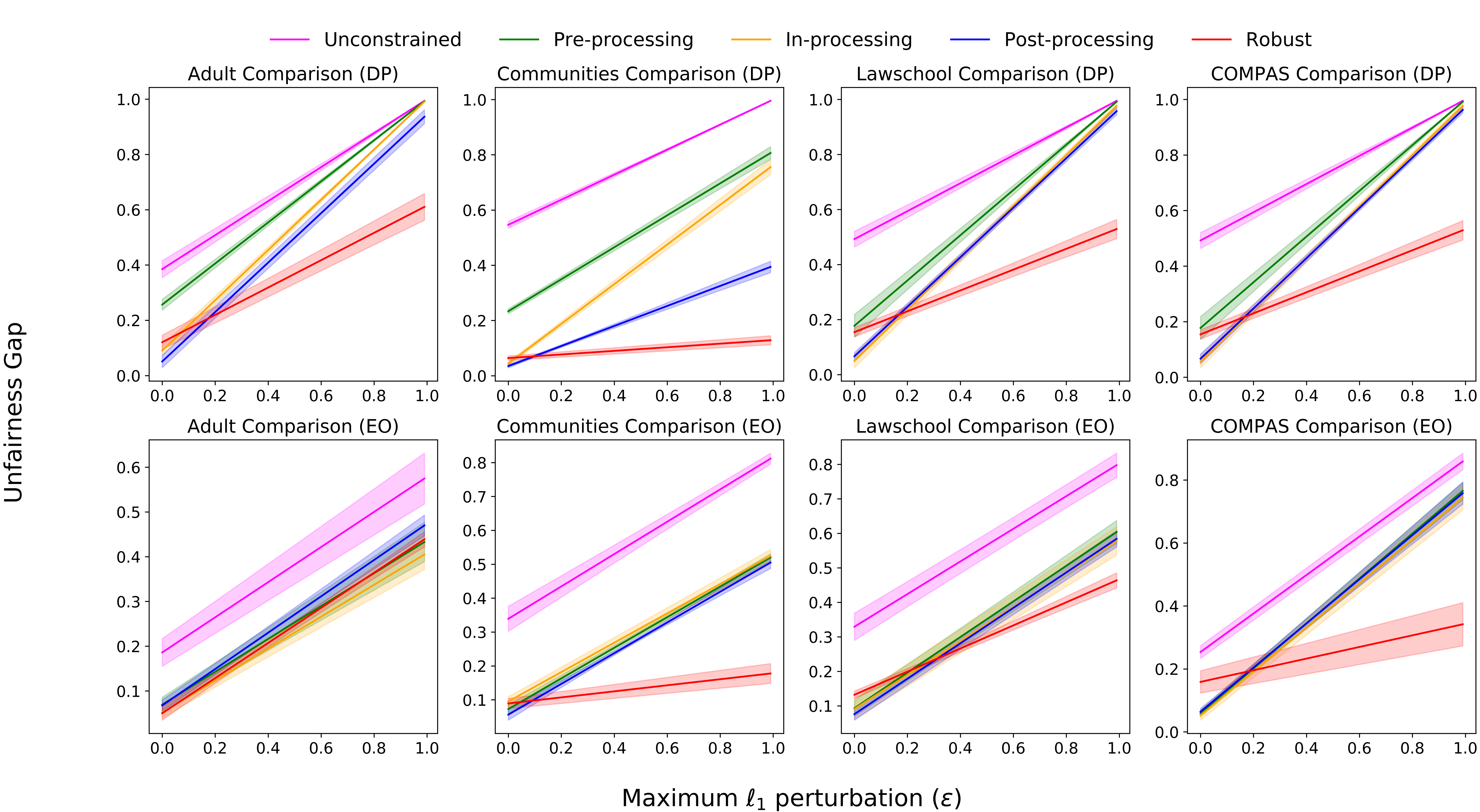}
    \caption{\emph{DP and EO Comparison.} We vary the $\ell_1$ distance $\epsilon$ on the x-axis and plot the  fairness violation on the y-axis. We use five random 80\%-20\% train-test splits to evaluate test accuracy and fairness. The bands across each line show standard error. For both DP and EO fairness, our algorithm is significantly more robust to reweightings that are within $\ell_1$ distance $\epsilon$ on most datasets. 
    } 
    \label{fig:stat_parity_plot}
\end{figure}

\textbf{Results.} 
We computed the maximum violating weight by solving a LP that is the same as the one used by the best response oracle (Algorithm~\ref{algo:best-lambda}), except that we restrict individual weights to be in the range $[(1-\epsilon)/n, (1+\epsilon)/n]$, and keep protected group marginals the same.
This keeps the $\ell_1$ distance between weighted and unweighted distributions within $\epsilon$.
Figure~\ref{fig:stat_parity_plot} compares the robustness of our classifier against the other fair classifiers, and we see that for both DP and EO, the fairness violation of our classifier grows  more slowly as $\epsilon$ increases, compared to the others, suggesting robustness to $\epsilon$-perturbations of the distribution. Our algorithm also performs comparatively well in both accuracy and fairness violation to the existing fair classifiers, though there is a trade-off between robustness and test accuracy.
The unweighted test accuracy of our algorithm drops by at most 5\%-10\% on all datasets, suggesting that robustness comes at the expense of test accuracy on the original distribution. However, on the test set (which is typically obtained from the same source as the original training data), the difference in fairness violation between our method and other methods is almost negligible on all the datasets, except for the COMPAS dataset, where the difference it at most 12\%. See the supplementary material for full details of this trade-off.

\section{Conclusion and Future Work}
In this work, we study the design of fair classifiers that are robust to weighted perturbations of the dataset. An immediate future work is to consider robustness against a  broader class of distributions like the set of distributions with a bounded $f$-divergence or Wasserstein distance from the training distribution. We also considered statistical notions of fairness and it would be interesting to perform a similar fairness vs robustness analysis for other notions of fairness.

\textbf{Acknowledgments}: We thank Shipra Agrawal and Roxana Geambasu for helpful preliminary discussions. DM was supported through a Columbia Data Science Institute Post-Doctoral Fellowship. DH was partially supported by NSF awards CCF-1740833 and IIS-15-63785 as well as a Sloan Fellowship. SJ was partially supported by NSF award CNS-18-01426.

\bibliographystyle{plainnat}
\bibliography{newrefs}

\newpage 
\appendix
\section{Appendix}

\subsection{Maximum violating weight linear program}

In our experiments, we use the following linear program to find the maximum fairness violating weighted distribution, while keeping individual weights to the range $[(1-\epsilon)/n, (1+\epsilon)/n]$, and keep protected group marginals the same:
\begin{align*}
    \max_w\ &\frac{1}{\pi_a} \sum_{i: a_i = a} w_i h(x_i,a) - \frac{1}{\pi_a'} \sum_{i: a_i = a'} w_i h(x_i,a') \\ 
    \textrm{s.t.}\ &\sum_{i: a_i = a} w_i = \pi_a\ \forall a \in \Ac \\
                   & \frac{1 - \epsilon}{n} \leq w_i \leq \frac{1 + \epsilon}{n} \ \forall i \in [n] \\
                   & \sum_{i} w_i = 1 \nonumber 
    .
\end{align*}
Here, the $\pi_a$ are the original protected group marginal probabilities.

\subsection{Proof of Theorem \ref{algo:meta}}
The proof of this theorem is similar to the proof of Theorem 7 in \cite{CLSS17} except that we use additive approximate oracle. Let $v^* = \min_{h \in \HH_{\WW}} \max_{w \in \WW} \ell(h,w)$. Recall that the $w$-player plays projected gradient descent algorithm, whereas the $h$-player uses $\AF(\cdot)$ to generate $\alpha$-approximate best response. By the guarantee of the projected gradient descent algorithm, we have
\begin{align*}
\frac{1}{T_m} \sum_{t=1}^{T_m} \ell(h_t,w_t) &\ge \max_{w \in \WW} \frac{1}{T_m} \sum_{t=1}^{T_m} \ell(h_t,w) - \max_{w \in \WW} \norm{w}_2 \sqrt{\frac{2}{T_m} } \\ &\ge \max_{w \in \WW} \frac{1}{T_m} \sum_{t=1}^{T_m} \ell(h_t,w) - \sqrt{\frac{2}{T_m} }
\end{align*}
The last inequality follows because the weights always sum to one, so $\norm{w}_2 \le \sqrt{\norm{w}_1} \le 1$. 
\begin{align*}
    v^* = \min_{h \in \HH_{\WW}} \max_{w \in \WW} \ell(h,w) &\ge \min_{h \in \HH_{\WW}} \frac{1}{T_m} \sum_{t=1}^{T_m} \ell(h,w_t)\ge \frac{1}{T_m} \sum_{t=1}^{T_m} \min_{h \in \HH_{\WW}} \ell(h,w_t) \\
    &\ge \frac{1}{T_m}\left(\sum_{t=1}^{T_m} \ell(h_t,w_t) - \alpha \right) = \frac{1}{T_m} \sum_{t=1}^{T_m} \ell(h_t,w_t) - \alpha \\
    &\ge \max_{w \in \WW} \frac{1}{T_m} \sum_{t=1}^{T_m} \ell(h_t,w) - \sqrt{\frac{2}{T_m} } - \alpha
\end{align*}
The third inequality follows from the $\alpha$-approximate fairness of $\AF(\cdot)$. Now rearranging the last inequality we get $\max_{w \in \WW} \frac{1}{T_m} \sum_{t=1}^{T_m} \ell(h_t,w) \le v^* + \sqrt{2/T_m} + \alpha$, the desired result.

\subsection{Proof of Lemma \ref{lem:discrete-weights}}
Recall the definition of demographic parity with respect to a weight vector $w$.
$$\delta^w_{DP}(f) = \abs{ \frac{\sum_{i: a_i = a} w_i f(x_i,a)}{\sum_{i: a_i = a} w_i} -  \frac{\sum_{i: a_i = a'} w_i f(x_i,a')}{{\sum_{i: a_i = a'} w_i}} }$$ 
For a given weight $w$, we construct a new weight $w' = (w'_1,\ldots,w'_n)$ as follows. For each $i \in [n]$, $w'_i$ is the upper-end point of the bucket containing $w_i$. Note that this guarantees that either $w_i\le \delta$ or $\frac{w'_i}{1+\gamma_1} \le w_i \le w_i'$. We now establish the following lower bound.
\begin{align}\label{eq:lbd-ratio}
\frac{\sum_{i: a_i = a} w_i f(x_i,a)}{\sum_{i: a_i = a} w_i} \ge \frac{1}{1+\gamma_1}  \frac{\sum_{i: a_i = a} w_i' f(x_i,a)}{\sum_{i: a_i = a} w_i'} \ge (1-\gamma_1) \frac{\sum_{i: a_i = a} w_i' f(x_i,a)}{\sum_{i: a_i = a} w_i'} 
\end{align}
Also note that,
\begin{align*}
\sum_{i: a_i = a} w_i' &\le \sum_{i:a_i = a, w_i > \delta} w_i + \sum_{i: a_i = a, w_i \le \delta} \delta 
\le (1+\gamma_1) \sum_{i:a_i = a, w_i > \delta} w_i' + n \delta
\end{align*}
This gives us the following.
\begin{align*}
\frac{\sum_{i: a_i = a} w_i f(x_i,a)}{\sum_{i: a_i = a} w_i} \le \frac{\sum_{i: a_i = a} w_i' f(x_i,a)}{\frac{1}{1+\gamma_1}\sum_{i: a_i = a} w_i' - \frac{n\delta}{1+\gamma_1} }  \le (1+\gamma_1) \frac{\sum_{i: a_i = a} w_i' f(x_i,a)}{\sum_{i: a_i = a} w_i' - n\delta } 
\end{align*}
Now we substitute, $\delta=\gamma_1/(2n)$ and get the following upper bound.
\begin{align}
\frac{\sum_{i: a_i = a} w_i f(x_i,a)}{\sum_{i: a_i = a} w_i} &\le (1+\gamma_1) \frac{\sum_{i: a_i = a} w_i' f(x_i,a)}{\sum_{i: a_i = a} w_i' - \gamma_1/2}\nonumber \\ &\le \frac{1+\gamma_1}{1-\gamma_1} \frac{\sum_{i: a_i = a} w_i' f(x_i,a)}{\sum_{i: a_i = a} w_i'} \le (1+3\gamma_1)  \frac{\sum_{i: a_i = a} w_i' f(x_i,a)}{\sum_{i: a_i = a} w_i'}\label{eq:ubd-ratio}
\end{align}
Now we bound $\delta^w_{DP}(f)$ using the results above. Suppose $\frac{\sum_{i: a_i = a} w_i f(x_i,a)}{\sum_{i: a_i = a} w_i} >  \frac{\sum_{i: a_i = a'} w_i f(x_i,a')}{\sum_{i: a_i = a'} w_i}$. Then we have,

\begin{align*}
\delta^w_{DP}(f) &\le (1+3\gamma_1)  \frac{\sum_{i: a_i = a} w_i' f(x_i,a)}{\sum_{i: a_i = a} w_i'} - (1-\gamma_1) \frac{\sum_{i: a_i = a} w_i' f(x_i,a)}{\sum_{i: a_i = a} w_i'}  \\
&\le  \frac{\sum_{i: a_i = a} w_i' f(x_i,a)}{\sum_{i: a_i = a} w_i'} -  \frac{\sum_{i: a_i = a} w_i' f(x_i,a)}{\sum_{i: a_i = a} w_i'} + 4\gamma_1\\
&\le \delta^{w'}_{DP}(f) + 4\gamma_1
\end{align*}
The first inequality uses the upper bound for the first term (Eq.~\eqref{eq:ubd-ratio}) and the lower bound for the second term (Eq.~\eqref{eq:lbd-ratio}). The proof when the first term is less than the second term in the definition of $\delta^w_{DP}(f)$ is similar. Therefore, if we guarantee that $\delta^{w'}_{DP}(f) \le \eps$, we have $\delta^w_{DP}(f) \le \eps+4\gamma_1$.

\subsection{Proof of Lemma \ref{lem:best-lambda-guarantee} }
We need to consider two cases. First, suppose that $R(w,a,h) - R(w,a',h) \le \eps - 4\gamma_1$ for all $w \in \Nc(\gamma_1,\WW)$ and $a,a' \in \Ac$. Then, $\delta^w_{DP}(h) = \sup_{a,a'\in \Ac} \abs{R(w,a,h) - R(w,a',h)} \le \eps - 4\gamma_1$ for any weight $w \in \Nc(\gamma_1,\WW)$. Therefore, by Lemma~\ref{lem:discrete-weights}, for any weight $w \in \WW$, we have $\delta^w_{DP}(h) \le \eps$. 
Now, for any marginal $\pi \in \Pi(\gamma_2,\Ac)$, and $a,a'$ consider the corresponding linear program. We show that the optimal value of the LP is bounded by $\eps$. Indeed, consider any weight $w$ satisfying the marginal conditions, i.e., $\sum_{i:a_i = a} w_i = \pi_a$ and $\sum_{i:a_i = a'} w_i = \pi_{a'}$. Then, the objective of the LP is 
$$\frac{1}{\pi_a} \sum_{i:a_i = a} w_i h(x_i,a) - \frac{1}{\pi_{a'}} \sum_{i:a_i = a'} w_i h(x_i,a') \le \sup_{w \in \WW} \delta^w_{DP}(h) \le \eps.$$
This implies that the optimal value of the LP is always less than $\eps$. So Algorithm~\ref{algo:best-lambda} returns the zero vector, which is also the optimal solution in this case.

Second, there exists $w \in \Nc(\gamma_1,\WW)$ and groups $a,a'$ such that $R(w,a,h) - R(w,a',h) > \eps - 4\gamma_1$ and in particular let $(w^*,a^*_1,a^{*}_2) \in \argmax_{w,a,a'} T(w,a,h) - T(w,a',h)$. Then the optimal solution sets $\lambda^{a^*_1,a^{*}_2}_{w^*}$ to $B$ and everything else to zero. Let $\pi_{a^*_1}$ and $\pi_{a^{*}_2}$ be the corresponding marginals for groups $a^*_1$ and $a^*_2$, and let $\pi'_{a^*_1}$ and $\pi'_{a^{*}_2}$ be the upper-end points of the buckets containing $\pi_{a^*_1}$ and $\pi_{a^{*_2}}$ respectively. 
As $\pi_{a^*_1}$ is marginal for a weight belonging to the set $\Nc(\gamma_1,\WW)$ and any weight in $\Nc(\gamma_1,\WW)$ puts at least $2\gamma_1/n$ on any training instance, we are always guaranteed that 
$$\pi_{a^*_1} \ge \frac{2\gamma_1}{n} \ge \frac{\delta}{1+\gamma_2}.$$
This guarantees the following inequalities
$$ \frac{\pi'_{a^*_1}}{1 + \gamma_2} \le \pi_{a^*_1} \le \pi'_{a^*_1}.$$
Similarly, we can show that
$$\frac{\pi'_{a^{*}_2}}{1 + \gamma_2} \le \pi_{a^{*}_2} \le \pi'_{a^{*}_2}.$$
Now, consider the LP corresponding to the marginal $\pi'$ and subgroups $a^*_1$ and $a^{*}_2$. 
\begin{align*}
&\frac{1}{\pi'_{a^*_1}} \sum_{i:a_i = a^*_1} w_i h(x_i,a^*_1) - \frac{1}{\pi'_{a^{*}_2}} \sum_{i:a_i = a^{*}_2} w_i h(x_i,a^{*}_2) \\
&\ge \frac{1}{(1+\gamma_2) \pi_{a^*_1}} \sum_{i:a_i = a^*_1} w_i h(x_i,a^*_1) - \frac{1}{\pi_{a^{*}_2}} \sum_{i:a_i = a^{*}_2} w_i h(x_i,a^{*}_2)  \\
&\ge (1-\gamma_2) R(w,a^*_1,h) - R(w,a^{*}_2,h)\\
&\ge R(w,a^*_1,h) - R(w,a^{*}_2,h) - \gamma_2
\end{align*}
Therefore, if the maximum value of $R(w,a,h) - R(w,a',h)$ over all weights $w$ and subgroups $a,a'$ is larger than $\eps + \gamma_2$, the value of the corresponding LP will be larger than $\eps$ and the algorithm will set the correct coordinate of $\lambda$ to $B$. On the other hand, if the maximum value of $R(w,a,h) - R(w,a',h)$ is between $\eps - 4\gamma_1$ and $\eps+\gamma_2$. In that case, the algorithm might return the zero vector with value zero. However, the optimal value in that case can be as large as $B \times (4\gamma_1 + \gamma_2)$.

\subsection{Proof of Theorem \ref{thm:convergence-learning} }
We first recall the following guarantee about the performance of the RFTL algorithm.
\begin{lemma}[Restated Theorem 5.6 from \cite{Hazan16}]\label{thm:rftl-guarantee}
The RFTL algorithm achieves the following regret bound for any $u \in \Kc$
$$\sum_{t=1}^T f_t(x_t) - f_t(u) \le \frac{\eta}{4} \sum_{t=1}^T \norm{\nabla f_t(x_t)}_{\infty}^{2} + \frac{R(u) - R(x_1)}{2 \eta}$$
Moreover, if $\norm{\nabla f_t(x_t)}_{\infty} \le G_R$ for all $t$ and $R(u) - R(x_1) \le D_R$ for all $u \in \Kc$, then we can optimize $\eta$ to get the following bound: $\sum_{t=1}^T f_t(x_t) - f_t(u) \le D_R G_R \sqrt{T}$.
\end{lemma}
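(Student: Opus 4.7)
The plan is to show that the averaged hypothesis $\hat h$ is an approximate saddle point of the Lagrangian game in Equation~\eqref{eq:lagrangian}, and then read off the accuracy bound and the fairness bound from the saddle-point inequality.

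\textbf{Step 1 (no-regret for the $h$-player).} I instantiate Lemma~\ref{thm:rftl-guarantee} with decision set $[0,1]^n$ (identifying a randomized classifier with its vector of per-example acceptance probabilities), regularizer $R(h) = \tfrac{1}{2}\|h\|_2^2$, and per-round loss $f_t(h) = U(h, \lambda_t)$. The payoff $U$ is linear in $h$ with gradient coordinates $L_i(\lambda_t)$ controlled by $B$ (the $\ell_1$ cap on $\lambda$), and $R(u) - R(h_1) = O(n)$. With the optimized step-size, this gives
\begin{equation*}
\sum_{t=1}^T U(h_t, \lambda_t) \le \min_{h \in \HH} \sum_{t=1}^T U(h, \lambda_t) + R_{\mathrm{RFTL}}.
\end{equation*}

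\textbf{Step 2 (approximate best response for the $\lambda$-player).} Lemma~\ref{lem:best-lambda-guarantee} yields $U(h_t, \lambda_t) \ge \max_\lambda U(h_t, \lambda) - B(4\gamma_1 + \gamma_2)$ at every round. Averaging over $t$ and pulling the max outside (using linearity of $U$ in $h$, so $\max_\lambda \tfrac{1}{T}\sum_t U(h_t, \lambda) = \max_\lambda U(\hat h, \lambda)$) gives
\begin{equation*}
\max_\lambda U(\hat h, \lambda) \le \frac{1}{T}\sum_{t=1}^T U(h_t, \lambda_t) + B(4\gamma_1 + \gamma_2).
\end{equation*}

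\textbf{Step 3 (saddle-point inequality).} Chain Steps 1 and 2 and specialize to a comparator $h^* \in \argmin_{h \in \HH_{\WW}} \ell(h, w^0)$. Since $h^*$ is $\eps$-fair on all of $\WW$, the proof of Lemma~\ref{lem:discrete-weights} shows each constraint residual of $h^*$ on $\Nc(\gamma_1,\WW)$ is at most $O(\gamma_1)$, so $U(h^*, \lambda_t) \le \ell(h^*, w^0) + O(B\gamma_1)$ for any feasible $\lambda_t$. Hence
\begin{equation*}
\max_\lambda U(\hat h, \lambda) \le \ell(h^*, w^0) + O(B\gamma_1) + R_{\mathrm{RFTL}}/T + B(4\gamma_1 + \gamma_2).
\end{equation*}

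\textbf{Step 4 (extract the two bounds).} For the loss bound, set $\lambda = 0$ on the LHS, giving $U(\hat h, 0) = \ell(\hat h, w^0)$ and thus $\ell(\hat h, w^0) \le \ell(h^*, w^0) + O(\eps)$ under the parameter choices below. For the fairness bound, suppose towards a contradiction that $\delta^w_{DP}(\hat h) > 2\eps$ for some $w \in \WW$; the proof of Lemma~\ref{lem:discrete-weights} then supplies $w' \in \Nc(\gamma_1,\WW)$ and groups $a,a'$ with $R(w',a,\hat h) - R(w',a',\hat h) > 2\eps - 4\gamma_1$. Plugging the $\lambda$ that concentrates mass $B$ on this single coordinate into the LHS of the display in Step 3 forces
\begin{equation*}
\ell(\hat h, w^0) + B(\eps - O(\gamma_1)) \le \ell(h^*, w^0) + O(B\gamma_1) + R_{\mathrm{RFTL}}/T + B(4\gamma_1 + \gamma_2),
\end{equation*}
which, together with bounded losses $\ell(h^*,w^0), \ell(\hat h,w^0) \in [0,1]$, contradicts the parameter choice below.

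\textbf{Main obstacle.} The delicate point is balancing $(B, \gamma_1, \gamma_2, \eta, T)$: the fairness argument demands $B$ large (so a residual violation dominates the bounded loss gap and the remaining error terms), while the RFTL regret grows with $G_R \sim B$ and the best-response slack $B(4\gamma_1 + \gamma_2)$ also scales with $B$. Setting $B = \Theta(1/\eps)$, $\gamma_1 = \gamma_2 = \Theta(\eps^2)$, optimizing $\eta$ per Lemma~\ref{thm:rftl-guarantee}, and taking $T = \Theta(n/\eps^2)$ threads the needle, driving each of the four error sources ($B\gamma_1$, $R_{\mathrm{RFTL}}/T$, $B(4\gamma_1 + \gamma_2)$, and the $1/B$ slack in the fairness contradiction) to $O(\eps)$ and yielding the stated bounds.
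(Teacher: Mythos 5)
Your proposal does not prove the statement in question. The statement is Lemma~\ref{thm:rftl-guarantee}, the regret bound for the RFTL algorithm itself; your argument is instead a proof of Theorem~\ref{thm:convergence-learning} (the saddle-point/convergence guarantee for Algorithm~\ref{algo:inner}). Worse, your Step~1 explicitly \emph{invokes} Lemma~\ref{thm:rftl-guarantee} to obtain the $h$-player's no-regret property, so read as a proof of that lemma the argument is circular: you assume the conclusion as an ingredient. Note that the paper does not prove this lemma either --- it is imported verbatim as Theorem~5.6 of \cite{Hazan16} --- so there is no internal proof to match; but a blind proof attempt would still need to establish the regret bound from first principles rather than derive its downstream consequences.

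A genuine proof would have to run the standard FTRL analysis: (i) the ``be-the-leader'' induction, showing that the sequence $x_{t+1} = \argmin_{x\in\Kc}\{\eta\sum_{s\le t}\nabla f_s(x_s)^{\top}x + R(x)\}$ satisfies $\sum_t \nabla f_t(x_t)^{\top}(x_{t+1}-u) \le \frac{R(u)-R(x_1)}{\eta}$ for every $u\in\Kc$; (ii) a stability step, using strong convexity of $R$ to bound $\norm{x_t - x_{t+1}}$ by $O(\eta\norm{\nabla f_t(x_t)}_{\infty})$ in the appropriate dual-norm pairing, whence $\nabla f_t(x_t)^{\top}(x_t - x_{t+1}) \le O(\eta\norm{\nabla f_t(x_t)}_{\infty}^2)$; and (iii) combining these with convexity of $f_t$ (so $f_t(x_t)-f_t(u)\le \nabla f_t(x_t)^{\top}(x_t-u)$) and then optimizing $\eta$ to trade off the $\frac{\eta}{4}\sum_t G_R^2$ term against $\frac{D_R}{2\eta}$, which yields the $D_R G_R\sqrt{T}$ bound. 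None of these steps appears in your write-up. (Your actual argument is a reasonable sketch of Theorem~\ref{thm:convergence-learning} and tracks the paper's Lemmas~\ref{lem:approx-minmax} and~\ref{lem:minmax-to-soln} fairly closely, but that is not the task here.)
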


The statement of this theorem follows from two lemmas. Lemma~\ref{lem:approx-minmax} proves that if Algorithm~\ref{algo:inner} is run for $T$ rounds, it computes a $(2M + B)\sqrt{n/T} + B(4\gamma_1 + \gamma_2)$-minmax equilibrium of the game $U(h,\lambda)$. On the other hand, Lemma~\ref{lem:minmax-to-soln} proves that any $\nu$-approximate solution $(\hat{h},\hat{\lambda})$ of the game $U(h,\lambda)$ has two properties
\begin{enumerate}
\item $\hat{h}$ minimizes training loss with respect to the weight $w^0$ up to an additive error of $2\nu$.
\item $\hat{h}$ provides $\eps$-fairness guarantee with respect to the set of all weights in $\WW$ upto an additive error fo $\frac{M + 2\nu}{B}$.
\end{enumerate}
Now substituting $\nu = (2M + B)\sqrt{n/T} + B(4\gamma_1 + \gamma_2)$ we get the following two guarantees:
$$\sum_{i=1}^n w^0_i \ell(\hat{h}(x_i,a_i),y_i) \le \min_{h \in \HH}\sum_{i=1}^n w^0_i \ell({h}(x_i,a_i),y_i) + 2(2M+B)\sqrt{\frac{n}{T}} + 2B(4\gamma_1 + \gamma_2)$$
and 
$$\forall w \in \WW\quad \delta^w_{DP}(\hat{h}) \le \eps + \frac{M}{B} + 2(4\gamma_1 + \gamma_2) + \left(1 + \frac{2M}{B} \right)\sqrt{\frac{n}{T}}.$$
Now we can set the following values for the parameters $B=3M/\eps$, $T=36n/\eps^2$, $4\gamma_1 + \gamma_2 = \eps/6$, and get the desired result. 

\begin{lemma}\label{lem:approx-minmax}
Suppose $\abs{\ell(y,\hat{y})} \le M$ for all $y,\hat{y}$. Then Algorithm~\ref{algo:inner} computes a $(2M + B)\sqrt{n/T} + B(4\gamma_1 + \gamma_2)$-approximate minmax equilibrium of the game $U(h,\lambda)$ for $h \in \HH$ and $\lambda \in \bbR^{\abs{N(\gamma_1,\WW)}\times \abs{\Ac}^2}_+, \norm{\lambda}_1 \le B$. 
\end{lemma}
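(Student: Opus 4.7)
The plan is to invoke the standard no-regret-implies-approximate-equilibrium argument. Write $\bar{h}$ for the uniform mixture over $h_1,\dots,h_T$ returned by Algorithm~\ref{algo:inner} and set $\bar{\lambda}=\tfrac{1}{T}\sum_{t=1}^T \lambda_t$. Since $U(h,\lambda)$ is linear in each argument separately, the duality gap decomposes as
\[
\max_\lambda U(\bar{h},\lambda)-\min_h U(h,\bar{\lambda})
\le \underbrace{\Bigl(\max_\lambda \tfrac{1}{T}\sum_{t=1}^T U(h_t,\lambda) - \tfrac{1}{T}\sum_{t=1}^T U(h_t,\lambda_t)\Bigr)}_{(A)}
+\underbrace{\Bigl(\tfrac{1}{T}\sum_{t=1}^T U(h_t,\lambda_t) - \min_h \tfrac{1}{T}\sum_{t=1}^T U(h,\lambda_t)\Bigr)}_{(B)},
\]
and it suffices to bound $(A)$ and $(B)$ separately and add them.

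First I would bound $(A)$ by the adversary's approximation slack. Because $\lambda_t$ is produced by running Algorithm~\ref{algo:best-lambda} on $h_t$, Lemma~\ref{lem:best-lambda-guarantee} gives $\max_\lambda U(h_t,\lambda)-U(h_t,\lambda_t)\le B(4\gamma_1+\gamma_2)$ at every round $t$. Pulling the maximum inside the average (which only increases the first term) yields $(A)\le B(4\gamma_1+\gamma_2)$ immediately.

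Next I would bound $(B)$ via the RFTL regret bound (Lemma~\ref{thm:rftl-guarantee}) applied on the set of randomized classifiers, identified with $[0,1]^n$ through the vector of evaluations $(h(x_i,a_i))_{i=1}^n$, with regularizer $R(h)=\tfrac12\|h\|_2^2$. Two constants must be computed. Using the explicit expression $L_i(\lambda)=(\ell(1,y_i)-\ell(0,y_i))w^0_i+\Delta_i$ derived in Section~3, the loss contribution is at most $2M$ (since $|\ell|\le M$ and $w^0_i\in[0,1]$), and $|\Delta_i|\le \|\lambda\|_1\le B$ because each coefficient $w_i/\sum_{j:a_j=a_i}w_j$ lies in $[0,1]$; hence $G_R\le 2M+B$. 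Taking $h_1=0$ the cube diameter gives $D_R\le n/2$. Plugging these into the tuned RFTL bound and dividing by $T$ yields $(B)\le (2M+B)\sqrt{n/T}$ up to absolute constants. Summing with $(A)$ recovers exactly $(2M+B)\sqrt{n/T}+B(4\gamma_1+\gamma_2)$.

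The main obstacle is a subtlety in invoking RFTL: Algorithm~\ref{algo:inner} does not solve the quadratic RFTL subproblem of Eq.~\eqref{eq:optim-step} verbatim but rather its linear upper bound, obtained by replacing $h(x_i,a_i)^2$ with $h(x_i,a_i)$ on $[0,1]^n$. I would handle this by reinterpreting the algorithm as RFTL with the regularizer $\tfrac12\sum_i h(x_i,a_i)$, re-deriving the constants $G_R$ and $D_R$ (which change only by constant factors on the hypercube), and re-invoking Lemma~\ref{thm:rftl-guarantee} to retain the same $O((2M+B)\sqrt{nT})$ regret. Everything else is routine bookkeeping of these constants.
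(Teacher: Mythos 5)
Your overall skeleton is the same as the paper's: the paper also combines the RFTL regret bound (Lemma~\ref{thm:rftl-guarantee}) with gradient bound $G_R\le 2M+B$ for the $h$-player and the $B(4\gamma_1+\gamma_2)$ approximate-best-response guarantee of Lemma~\ref{lem:best-lambda-guarantee} for the $\lambda$-player, and concludes via the standard no-regret-to-equilibrium argument of Freund--Schapire; your bound on $|\Delta_i|$ and your treatment of term $(A)$ match the paper exactly. There is, however, a genuine problem with the way you patch the one obstacle you correctly identify. You observe that Algorithm~\ref{algo:inner} does not solve the quadratic RFTL subproblem of Eq.~\eqref{eq:optim-step} but only its linearization, and you propose to repair this by ``reinterpreting the algorithm as RFTL with the regularizer $\tfrac12\sum_i h(x_i,a_i)$.'' This does not work: RFTL's regret guarantee (and Lemma~\ref{thm:rftl-guarantee} as stated) requires a \emph{strongly convex} regularizer, and $\tfrac12\sum_i h(x_i,a_i)$ is linear. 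With a linear regularizer the update degenerates to Follow-the-Leader shifted by a constant vector, which over the cube $[0,1]^n$ with adversarial linear losses can incur $\Omega(T)$ regret, so you cannot ``re-invoke Lemma~\ref{thm:rftl-guarantee}'' to retain the $O((2M+B)\sqrt{nT})$ bound. For what it is worth, the paper's own proof simply applies the RFTL guarantee to the idealized update of Eq.~\eqref{eq:optim-step} and is silent about the linearization, so you have put your finger on a real subtlety --- but your proposed resolution is not a valid substitute for an actual argument here.

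A secondary slip: you compute $D_R\le n/2$ from $R(u)-R(0)=\tfrac12\|u\|_2^2\le n/2$ and then claim the tuned bound gives $(2M+B)\sqrt{n/T}$ ``up to absolute constants.'' Under the literal reading of Lemma~\ref{thm:rftl-guarantee} ($R(u)-R(x_1)\le D_R$ and regret $\le D_RG_R\sqrt{T}$), plugging in $D_R=n/2$ yields $(2M+B)\,n/(2\sqrt{T})$ after averaging, which is worse than the target by a factor of order $\sqrt{n}$ --- not an absolute constant. The paper instead takes $D_R\le\sqrt{n}$, i.e., it implicitly uses the version of the RFTL theorem in which the diameter enters as the square root of the regularizer's range; you need that reading to land on $(2M+B)\sqrt{n/T}$.
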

\begin{proof}
At round $t$, the cost is linear in $h_t$, i.e., $f_t(h_t) = \sum_{i=1}^n L(\lambda_t)_i h_t(x_i,a_i)$. Let us write $\bar{\lambda} = \frac{1}{T} \lambda_t$ and $D$ to be the uniform distribution over $h_1,\ldots,h_T$. Since we chose $R(x) = \tfrac12 \norm{x}_2^2$ as the regularization function and the actions are $[0,1]$ vectors in $n$-dimensional space, the diameter $D_R$ is bounded by $\sqrt{n}$. On the other hand, $\norm{\nabla f_t(h_t)}_{\infty} = \max_i \abs{L(\lambda_t)_i} $. We now bound $\abs{L(\lambda_t)_i}$ for an arbitrary $i$. Suppose $y_i=1$. The proof when $y=0$ is identical.
\begin{align*}
\abs{L(\lambda_t)_i} = \abs{c^1_i - c^0_i} = &\abs{w^0_i}\abs{\ell(0,1) - \ell(1,1)} + \abs{\Delta_i} \\
&\le 2M + B
\end{align*}
The last line follows as $w^0_i \le 1$ and since $\lambda_t$ is an approximate best reponse computed by Algorithm~\ref{algo:best-lambda}, exactly one $\lambda$ variable is set to $B$.
Therefore, by Theorem~\ref{thm:rftl-guarantee}, for any hypothesis $h \in \HH$,
\begin{align}
&\sum_{t=1}^T \sum_{i=1}^n L(\lambda_t)_i h_t(x_i,a_i) - \sum_{i=1}^n L(\lambda_t)_i h(x_i,a_i) \le (2M + B)\sqrt{nT}\nonumber \\
\Leftrightarrow &\sum_{t=1}^T U(h_t,\lambda_t) - U(h,\lambda_t) \le (2M + B)\sqrt{nT}\nonumber \\
\Leftrightarrow &\frac{1}{T} \sum_{t=1}^T U(h_t,\lambda_t) \le U(h,\bar{\lambda}) + \frac{(2M + B)\sqrt{n}}{\sqrt{T}} \label{eq:minmax-1}
\end{align}
On the other hand, $\lambda_t$ is an approximate $B(4\gamma_1 + \gamma_2)$-approximate best response to $h_t$ for each round $t$. Therefore, for any $\lambda$ we have,
\begin{align}
&\sum_{t=1}^T U(h_t,\lambda_t) \ge \sum_{t=1}^T U(h_t,\lambda) - BT(4\gamma_1 + \gamma_2)\nonumber \\
\Leftrightarrow &\frac{1}{T} \sum_{t=1}^T U(h_t,\lambda_t) \ge \E_{h \sim D} U(h,\lambda) - B(4\gamma_1 + \gamma_2) \label{eq:minmax-2}
\end{align}
Equations~\eqref{eq:minmax-1} and~\eqref{eq:minmax-2} immediately imply that the distribution $D$ and $\bar{\lambda}$ is a $(2M + B)\sqrt{n/T} + B(4\gamma_1 + \gamma_2)$-approximate equilibrium of the game $U(h,\lambda)$~\cite{FS96}.
\end{proof}

\begin{lemma}\label{lem:minmax-to-soln}
Let $(\hat{h},\hat{\lambda})$ be a $\nu$-approximate minmax equilibrium of the game $U(h,\lambda)$. Then,
$$\sum_{i=1}^n w^0_i \ell(\hat{h}(x_i,a_i),y_i) \le \min_{h \in \HH}\sum_{i=1}^n w^0_i \ell({h}(x_i,a_i),y_i) + 2\nu$$
and 
$$\forall w \in \WW\quad \delta^w_{DP}(\hat{h}) \le \eps + \frac{M + 2\nu}{B}$$
\end{lemma}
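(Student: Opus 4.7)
The plan is to unpack the definition of a $\nu$-approximate minimax equilibrium, namely
\[\max_{\norm{\lambda}_1 \le B} U(\hat{h},\lambda) \le U(\hat{h},\hat{\lambda}) + \nu \quad \text{and} \quad \min_{h \in \HH} U(h,\hat{\lambda}) \ge U(\hat{h},\hat{\lambda}) - \nu,\]
and to derive both conclusions by chaining these inequalities with cleverly chosen test $\lambda$'s (first inequality) and test $h$'s (second inequality). Throughout, let $V = \sum_{i=1}^n w^0_i \ell(\hat{h}(x_i,a_i),y_i)$ and take $h^* \in \argmin_{h \in \HH_\WW} \sum_{i=1}^n w^0_i \ell(h(x_i,a_i),y_i)$ as the feasible reference hypothesis.

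For the \emph{accuracy bound}, since $\lambda = 0$ satisfies $\norm{\lambda}_1 \le B$, I plug it into the first inequality to get $V = U(\hat{h},0) \le U(\hat{h},\hat{\lambda}) + \nu$. Plugging $h = h^*$ into the second inequality gives $U(\hat{h},\hat{\lambda}) \le U(h^*, \hat{\lambda}) + \nu$. Since $h^* \in \HH_\WW$ is fair on all of $\WW$, the Lagrangian penalty $\sum_{w,a,a'} \hat{\lambda}_w^{a,a'}(R(w,a,h^*)-R(w,a',h^*)-\eps+4\gamma_1)$ is nonpositive, so $U(h^*,\hat{\lambda}) \le \sum_{i=1}^n w^0_i \ell(h^*(x_i,a_i),y_i)$. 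Chaining these three steps yields the accuracy bound.

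For the \emph{fairness bound}, fix any $w \in \Nc(\gamma_1,\WW)$ and $a,a'\in \Ac$, and plug into the first inequality the vector $\lambda^\circ$ whose only nonzero coordinate is $(\lambda^\circ)_w^{a,a'} = B$. This gives
\[V + B\bigl(R(w,a,\hat{h}) - R(w,a',\hat{h}) - \eps + 4\gamma_1\bigr) = U(\hat{h},\lambda^\circ) \le U(\hat{h},\hat{\lambda}) + \nu.\]
Using the second inequality with the feasible $h^*$, together with the uniform bound $\sum_{i=1}^n w^0_i \ell(h^*(x_i,a_i),y_i) \le M$ and the nonpositive penalty at $h^*$, bounds $U(\hat{h},\hat{\lambda}) \le M + \nu$. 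Rearranging and using $V \ge 0$ yields $R(w,a,\hat{h}) - R(w,a',\hat{h}) \le \eps - 4\gamma_1 + (M+2\nu)/B$ for every $w \in \Nc(\gamma_1,\WW)$. Taking the max over $a,a'$ gives $\delta^w_{DP}(\hat{h}) \le \eps - 4\gamma_1 + (M+2\nu)/B$ on $\Nc(\gamma_1,\WW)$, and applying Lemma~\ref{lem:discrete-weights} (which extends any fairness bound on $\Nc(\gamma_1,\WW)$ to $\WW$ at an additive cost of $4\gamma_1$) finally gives $\delta^w_{DP}(\hat{h}) \le \eps + (M+2\nu)/B$ for all $w \in \WW$.

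The main subtlety is verifying that the Lagrangian penalty at $h^*$ is genuinely nonpositive: $h^* \in \HH_\WW$ enforces $\eps$-fairness on $\WW$, whereas the Lagrangian imposes the tighter constraint $R(w,a,\cdot)-R(w,a',\cdot)\le \eps-4\gamma_1$ on the discretized set $\Nc(\gamma_1,\WW)$, which does not follow automatically. I would bridge this using the same bucket-to-bucket ratio comparison that underlies Lemma~\ref{lem:discrete-weights} (ratios $R(w,a,h)$ change by at most a factor of $1\pm O(\gamma_1)$ when passing between $\WW$ and $\Nc(\gamma_1,\WW)$); the residual $O(B\gamma_1)$ slack is absorbed into $\nu$ in the downstream application and does not affect the form of the bound. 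A minor caveat is that the lemma writes $\min_{h\in\HH}$ while the argument naturally produces $\min_{h\in\HH_\WW}$; I read this as a notational slip consistent with the approximate-fair-classifier setup of Definition~\ref{def:apx-fair}.
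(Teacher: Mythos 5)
Your proof is correct and follows essentially the same route as the paper's: test the equilibrium inequalities with $\lambda=0$ (respectively a single-coordinate-$B$ vector) and with the feasible reference $h^*$, use $V\ge 0$ and $\ell\le M$, and finish with Lemma~\ref{lem:discrete-weights}; your version even streamlines the accuracy bound by avoiding the paper's feasible/infeasible case split for $\hat{h}$. The one substantive difference is the subtlety you flag: the paper takes $h^*$ to be optimal for the \emph{discretized} program~\eqref{eq:final-objective} (so the Lagrangian penalty at $h^*$ is nonpositive by construction), whereas you take $h^*\in\HH_\WW$ and must bridge via the reverse direction of Lemma~\ref{lem:discrete-weights}, picking up an extra $O(B\gamma_1)$ term --- your patch is sound and of the same order as the $B(4\gamma_1+\gamma_2)$ slack the paper already carries.
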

\begin{proof}
Let $(\hat{h},\hat{\lambda})$ be a $\nu$-approximate minmax equilibrium of the game $U(h,\lambda)$, i.e.,
\begin{align*}
 \forall h \quad  U(\hat{h},\hat{\lambda}) \le U(h,\hat{\lambda}) + \nu  \quad \text{ and } \quad \forall \lambda  \quad U(\hat{h},\hat{\lambda}) \ge U(\hat{h}, \lambda) - \nu
\end{align*}
Let $h^*$ be the optimal feasible hypothesis. First suppose that $\hat{h}$ is feasible, i.e., $T(w,a,\hat{h}) - T(w,a',\hat{h}) \le \eps - 4\gamma_1$ for all $w \in N(\gamma_1,\WW)$ and $a,a' \in \Ac$. In that case, the optimal $\lambda$ is the zero vector and $\max_{\lambda} U(\hat{h},\lambda) = \sum_{i=1}^n w^0_i \ell(h(x_i,a_i),y_i)$. Therefore,
\begin{align*}
\sum_{i=1}^n w^0_i \ell(\hat{h}(x_i,a_i),y_i) = \max_{\lambda} U(\hat{h},\lambda) \le U(\hat{h},\hat{\lambda}) + \nu \le U(h^*,\hat{\lambda}) + 2\nu \le \sum_{i=1}^n w^0_i \ell(h^*(x_i,a_i),y_i) + 2\nu
\end{align*}
The last inequality follows because $h^*$ is feasible and $\lambda$ is non-negative. Now consider the case when $\hat{h}$ is not feasible, i.e., there exists $w,a,a'$ such that $T(w,a,\hat{h}) - T(w,a',\hat{h}) > \eps - 4\gamma_1$. In that case, let $(\hat{w},\hat{a},\hat{a}')$ be the tuple with maximum violation and the optimal $\lambda$, say $\lambda^*$, sets this coordinate to $B$ and everything else to zero. Then
\begin{align*}
\sum_{i=1}^n w^0_i \ell(\hat{h}(x_i,a_i),y_i) &= U(\hat{h},\lambda^*) - B(T(\hat{w},\hat{a},\hat{h}) - T(\hat{w},\hat{a}',\hat{h}) - \eps + 4\gamma_1)\\
 &\le U(\hat{h},\lambda^*) \le U(\hat{h},\hat{\lambda}) + \nu \le U(h^*,\hat{\lambda}) + 2\nu \le \sum_{i=1}^n w^0_i \ell(h^*(x_i,a_i),y_i) + 2\nu.
\end{align*}
The previous chain of inequalities also give 
\begin{align*}
B \left( \max_{(w,a,a')}T(w,a,\hat{h}) - T(w,a',\hat{h}) -\eps + 4\gamma_1 \right)\le \sum_{i=1}^n w^0_i \ell(h^*(x_i,a_i),y_i) + 2\nu \le M + 2\nu.
\end{align*}
This implies that for all weights $w \in N(\gamma_1,\WW)$ the maximum violation of the fairness constraint is $(M + 2\nu)/ B$, which in turn implies a bound of at most $(M + 2\nu)/B + \eps$ on the fairness constraint with respect to any weight $w \in \WW$. 
\end{proof}

\section{Fairness vs. Accuracy Tradeoff}
\begin{figure}[!t]
    \centering
    \includegraphics[width=\textwidth]{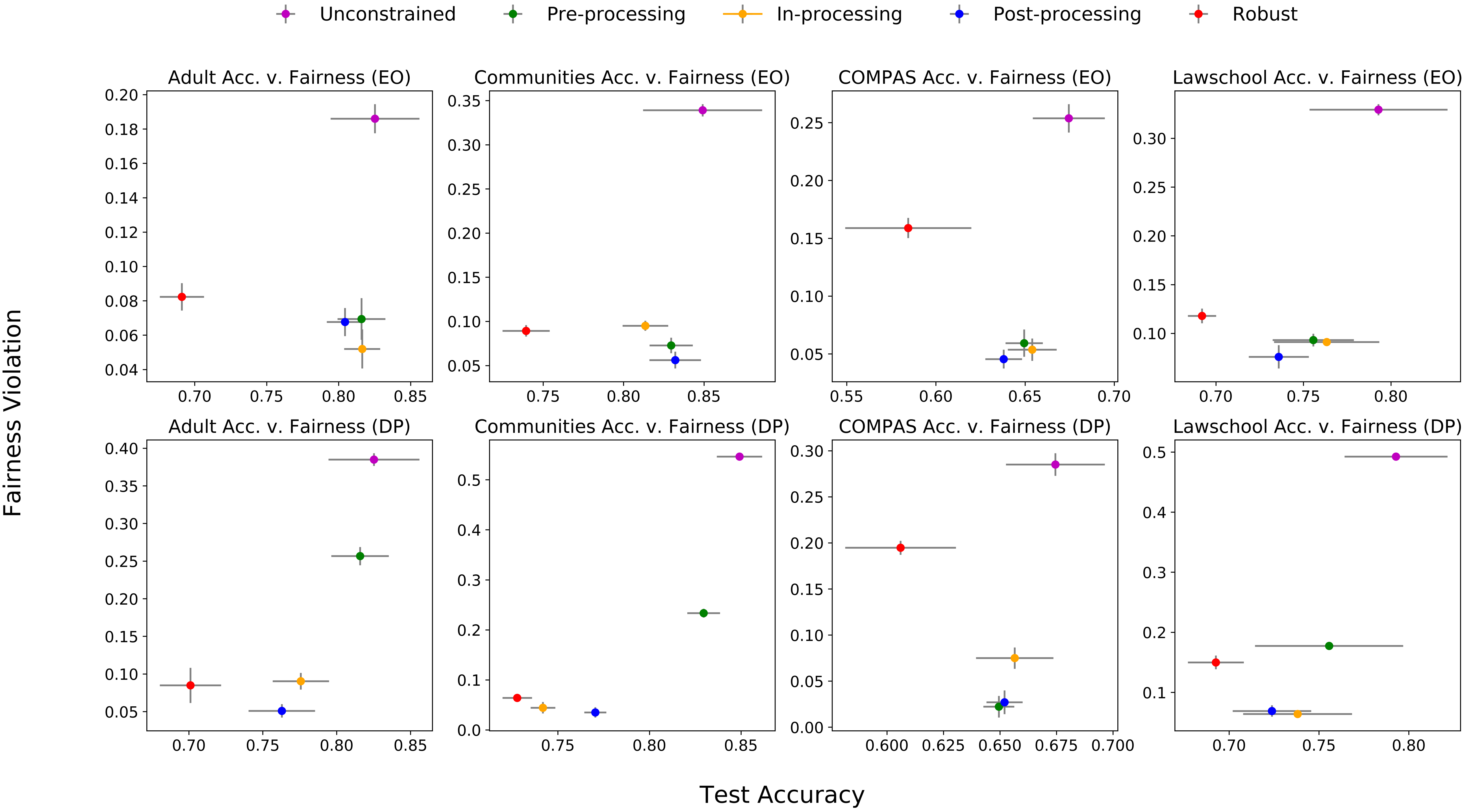}
    \caption{\emph{Fairness v. Accuracy.} We plot the accuracy (x-axis) vs.~the fairness violation (y-axis) for demographic parity and equalized odds for our robust and fair classifier. The reported values are averages over five random 80\%-20\% train-test splits, with standard error bars. We observe that the fairness violation is mostly comparable to existing state-of-the-art fair classifiers, though robustness comes at the expense of somewhat lower test accuracy. 
    } 
    
    \label{fig:acc_fair_plot}
\end{figure}
In Figure \ref{fig:acc_fair_plot}, we see the accuracy and fairness violation of our algorithm against the other state-of-the-art fair classifiers. We find that, though the fairness violation is mostly competitive with the existing fair classifiers, robustness against weighted perturbations comes at the expense of somewhat lower test accuracy.

\end{document}